\theoremstyle{plain}
\newtheorem{theorem}{Theorem}[section]
\newtheorem{proposition}[theorem]{Proposition}
\theoremstyle{definition}
\newtheorem{assumption}[theorem]{Assumption}
\theoremstyle{remark}
\newcolumntype{P}[1]{>{\centering\arraybackslash}p{#1}}
\newcommand{\E}{\ensuremath{\mathbb{E}}} %
\newcommand{\Cov}{\mathrm{Cov}}
\newcommand{\asto}{\overset{as}{\to}}
\icmltitlerunning{Position: AI Evaluations Should be Grounded on a Theory of Capability}
\begin{document}

\twocolumn[
  \icmltitle{Position: AI Evaluations Should be Grounded on a Theory of Capability}



  \icmlsetsymbol{equal}{*}

  \begin{icmlauthorlist}
    \icmlauthor{Nathanael Jo}{mit}
    \icmlauthor{Ashia Wilson}{mit}
  \end{icmlauthorlist}

  \icmlaffiliation{mit}{MIT EECS, Cambridge, USA}
  \icmlcorrespondingauthor{Nathanael Jo}{nathanjo@mit.edu}

  \icmlkeywords{Machine Learning, ICML}

  \vskip 0.3in
]



\printAffiliationsAndNotice{}  

\begin{abstract}
Evaluations of generative models are now ubiquitous, and their outcomes critically shape public and scientific expectations of AI's capabilities. Yet skepticism about their reliability continues to grow. How can we know that a reported accuracy genuinely reflects a model’s underlying performance? Although benchmark results are often presented as direct measurements of capability, in practice they are inferences: treating a score as evidence of capability already presupposes a theory of what it means to be capable at a task.

We argue that AI evaluations should instead be framed as inference tasks grounded on an explicit theory of capability. While this perspective is standard in fields like psychometrics, it remains underdeveloped in AI evaluation, where core assumptions are often left \textit{implicit}. As a proof-of-concept, we empirically show that reported performance can depend strongly on the evaluator’s modeling assumptions, underscoring the need for transparent, theory-driven evaluation practices. We conclude by offering an \textbf{Evaluation Card} to help researchers document, justify, and scrutinize the modeling decisions underlying AI evaluations.


\end{abstract}
\section{Introduction}

Evaluations (from hereon, ``evals'') of generative models using benchmarks have become ubiquitous as a way to probe model capabilities or harms. Companies developing large language models (LLMs) routinely assess their systems' intelligence using standardized knowledge tasks, while research papers proposing new methods often conduct comparative evaluations against state-of-the-art models. Leaderboards hosted on \href{https://www.vellum.ai/llm-leaderboard}{Vellum} and \href{https://huggingface.co/spaces/open-llm-leaderboard/open_llm_leaderboard#/}{Huggingface} have also emerged as open-source platforms for directly comparing the capabilities of various LLMs. The rapidly growing interest in evals reflects our collective desire to understand how generative models behave, especially as they are now widely utilized, touted as highly capable, but inherently black box in nature. Yet, there is growing consensus that generative AI evals using benchmarks are brittle and unreliable \citep{mitchell2023we, eriksson2025can}. We argue that many of these failures stem from a deeper issue: \textbf{AI evals implicitly treat scores as measurements of capability, without specifying what ``capability'' means in the context of a benchmark.}

To see why this lack of clarity matters,  it is useful to return to a more canonical setting in machine learning (ML). Consider a clinical predictive algorithm designed to detect a rare disease. In this case, the notion of \emph{capability} is clear: the algorithm is useful to the extent that it correctly identifies the disease when it is present, and avoids false alarms when it is not, in the population where it will actually be deployed. This naturally motivates appropriate evaluation metrics. Measures such as precision and recall are well suited to this goal, while accuracy is often misleading because it can obscure severe class imbalance. In other words, (good) evaluation in classical predictive settings is tethered to a theory of what the model is meant to do, and metrics are chosen to reflect that theory.

In the era of generative models, this paradigm has shifted substantially because models are often general-purpose. Rather than being evaluated on a single, well-defined task, they are typically assessed using broad benchmarks that aggregate performance across diverse tasks. For example, MMLU-Pro \cite{wang2024mmlu} is intended to probe general understanding and reasoning, yet individual questions may simultaneously draw on factual recall, linguistic competence, and multiple forms of reasoning. As a result, performance on a benchmark is often implicitly a proxy for some notion of capability that is rarely articulated or agreed upon. When this connection between performance and capability is left implicit, interpreting the resulting conclusions becomes unreliable: it is unclear what a higher score actually means and which abilities have truly improved.

Once a theory of capability has been set, uncertainty quantification becomes unavoidable. Yet most evals do not attempt to quantify the full range of uncertainty inherent in generative models.
In predictive ML, uncertainty is relatively straightforward: predictions typically reduce to a low-dimensional probability or score, and the main concern is finite-sample uncertainty arising from limited coverage of the population distribution. Finite-sample uncertainty remains important for generative models  \cite{miller2024adding, chiang2024chatbot}, but these models introduce additional and qualitatively different sources of uncertainty, such as sensitivity to perturbations, context, and inference-time strategies. How should tests be designed to quantify capability under these sources of uncertainty?

\textbf{In this position paper, we argue that evals for generative models should return to their roots in inference by starting with a clear definition of capability itself.} We draw inspiration from psychometrics and educational assessments, where statistical models have long been used to infer latent human abilities from observed performance. Following this tradition, our goal is to develop a framework for AI evals: one that makes explicit what notion of capability is being measured and identifies the uncertainties that threaten reliable inference. Doing so allows evaluators to communicate what benchmark results do—and do not—tell us about their true capabilities.

\textbf{Contributions.} 
We make three contributions. First, we outline a suite of possible theories of capability that were derived from \textit{human} cognition. We argue that most AI evals assume an overly simplistic capability theory (Section~\ref{sec:implicit_asn}). We then discuss \textbf{how these theories may be adapted to account for systematic ways in which AI systems differ from humans} (Section~\ref{sec:do_not_apply_ai}). 

Second, \textbf{we empirically apply these theories of capability as a proof of concept}, showing that they yield systematically different interpretations of the same benchmark results\footnote{Code can be found in \url{https://github.com/nathanaj99/ai_stat_test}.} (Section~\ref{sec:proof_of_concept}). To make this comparison concrete, we study one common source of uncertainty in generative models—sensitivity to input perturbations—but note that our argument is not tied to this choice.

Finally, we emphasize that no theory is strictly better than others; they come with different assumptions and trade-offs on the task structure and data. In Section~\ref{sec:guidelines}, \textbf{we offer an \textcolor{blue}{Evaluation Card} to help researchers document, justify, and scrutinize the modeling decisions underlying AI evaluations}. We also identify potential avenues for future work in Section~\ref{sec:limitations}, and provide an extended related work in Appendix~\ref{appsec:related_work}.

\vspace{-0.1em}
\section{Current Evals Implicitly Assume a Theory of Capability}\label{sec:implicit_asn}

In this section, we argue that current evals assume a theory of capability, but fail to make those assumptions explicit. 

\subsection{Most Evals Use a CTT Model}\label{sec:ai_ctt}

\begin{tcolorbox}[title=Theory 1: Classical Test Theory (CTT),breakable]

Originating in the early 20th century, CTT models an observed test score $\phi$ as the sum of a true score $\theta$ and random error $\epsilon$ \citep{raykov2011introduction}:
\begin{equation}
    \phi = \theta + \epsilon.
    \label{eq:ctt}
\end{equation}

Assumption~\ref{asn:ctt} requires that $\epsilon$ has mean zero and is independent of $\theta$, reflecting the idea that humans make \emph{random}, \emph{ability-independent} mistakes\footnote{CTT additionally assumes independence of errors across questions, which we ignore here since AI typically exhibit this property.}. Under parallel forms of a test, repeated scores converge to $\theta$ \citep{lord2008statistical}.

\begin{assumption}
    Under the CTT model in~\eqref{eq:ctt}, $\E[\epsilon]=0$ and $\Cov(\theta,\epsilon)=0$.
    \label{asn:ctt}
\end{assumption}
    
\end{tcolorbox}

In practice, CTT reduces to averaging correctness across items, implicitly assuming that all questions are equally informative about capability. This mirrors nearly all contemporary AI evals, which report simple aggregate statistics on performance. Formally, letting
\begin{equation}
    \phi_i = \theta_i + \epsilon_i,
    \label{eq:ctt_i}
\end{equation}
CTT treats each observed response as an unbiased but noisy measurement of a per-item score $\theta_i$, yielding overall capability $\theta = \E_i[\theta_i]$. This assumption underlies virtually all widely used static benchmarks across NLP, vision, and reasoning tasks: a model's ``capability'' is its average performance under the question distribution. \textbf{We note that CTT need not only capture accuracy} or correctness; for example, HELM~\cite{liang2023holistic} reports metrics such as fairness and calibration, which translate to redefining $\phi_i$:
\begin{equation*}
    \phi_i = \text{disp}_i(\hat{y}, y, a);
\quad \phi_i = \left|\Pr(y=1|\hat{p} \in B_i) - \hat{p}\right|,
\end{equation*}

where $\hat{y}, y$ are predicted and true outcomes, $a$ is protected group, $\text{disp}$ is some disparity metric, $\hat{p}$ is the predicted probability, and $B_i$ denotes a bin of predicted confidence values.

\subsection{Recent IRT-Based Methods Implicitly Adopt a Different Theory of Capability}\label{sec:ai_irt}
A growing line of work proposes IRT-inspired methods for AI evaluation, often to reduce sample complexity by leveraging question-level information (e.g., difficulty). These approaches replace average performance (CTT) with a \emph{latent trait} model in which an ability parameter $\theta$ generates response probabilities.

\begin{tcolorbox}[title=Theory 2: Item Response Theory (IRT),breakable]
Formally, let $\theta\in\mathbb{R}^K$ be a $K$-dimensional ability vector that governs the probability of answering correctly. The most common model specifies
\begin{equation}
    f_i(\theta)
    \;=\;
    \Pr(\phi_i = 1 \mid \theta)
    \;=\;
    \sigma(a_i^\top \theta - b_i),
    \label{eq:mirt}
\end{equation}
where $a_i\in\mathbb{R}^K$ are item loadings and $b_i \in \mathbb{R}$ is item difficulty. Intuitively, Eq~\eqref{eq:mirt} says that the probability of getting a correct answer for question $i$ depends on one's ability $\theta$, how ability interacts with the question $a_i$, subtracted by how difficult that question is $b_i$. More capable models might still have trouble with more difficult questions. We model observed responses as Bernoulli draws,
\begin{equation}
    \phi_i = f_i(\theta) + \epsilon_i,
    \label{eq:irt}
\end{equation}
where $\epsilon_i$ represents logistic or probit noise. By modeling how items discriminate along dimensions in~$\theta$, IRT provides sample-efficient estimates, and underlies much of today's adaptive standardized tests \citep{collegeboard2025scores}. IRT also satisfies Assumption~\ref{asn:ctt}; see Proposition~\ref{prop:asn_irt}.
\end{tcolorbox}

IRT originated from psychometric testing of humans, but it is now the bedrock for contemporary AI evaluations. Examples include an adaptive testing design \citep{zhuang2023static}, constructing smaller, more informative benchmarks \citep{polo2024tiny}, and others \citep{martinez2019item, rodriguez2022clustering, schilling2025lifting, xu2025fairness, zhou2025lost, castleman2025rethinking, jo2026subjectivity}.  While powerful, these works instantiate a \emph{different} theory of capability: ability $\theta$ is a latent parameter defined by a specific generative model, not the empirical accuracy. Under IRT, two models with the same accuracy can receive different ability estimates if their errors fall on items with different discrimination parameters. Without making the underlying theory explicit, such differences in what ``capability'' can easily lead to misinterpretation, as we show empirically in Section~\ref{sec:experiments}.

\begin{figure*}[t]
\centering
\begin{tikzpicture}[
    node distance=6mm,
    latent/.style={circle,draw,thick,inner sep=1.5pt},
    item/.style={rectangle,draw,thick,inner sep=1.5pt},
    param/.style={rectangle,draw,thick,inner sep=1.2pt,rounded corners=1pt},
    intuition/.style={align=left,text width=3.8cm,font=\footnotesize},
    >=Latex
]


\begin{scope}[shift={(-0.1,0)}]
\node at (0,2.3) {\shortstack{\textbf{(i) Classical Test}\\\textbf{Theory (CTT)}}};

\node[item] (c1) at (-1.1,0.35) {$X_1$};
\node[item] (c2) at (0,0.35) {$X_2$};
\node[item] (c3) at (1.1,0.35) {$X_3$};

\node[latent] (acc) at (0,1.4) {\small Score};

\draw[->] (c1) -- (acc);
\draw[->] (c2) -- (acc);
\draw[->] (c3) -- (acc);

\node[intuition] at (0,-1) {Observed responses to questions $X_i$ aggregate into an overall test score.};
\end{scope}

\begin{scope}[shift={(4,0)}]
\node at (0,2.3) {\shortstack{\textbf{(ii) Item Response}\\\textbf{Theory (IRT)}}};

\node[latent] (theta) at (0,1.6) {$\theta$};

\node[param] (a1) at (-1.1,0.95) {$a_1,b_1$};
\node[param] (a2) at (0,0.95) {$a_2,b_2$};
\node[param] (a3) at (1.1,0.95) {$a_3,b_3$};

\node[item] (i1) at (-1.1,0.25) {$X_1$};
\node[item] (i2) at (0,0.25) {$X_2$};
\node[item] (i3) at (1.1,0.25) {$X_3$};

\draw[->] (theta) -- (i1);
\draw[->] (theta) -- (i2);
\draw[->] (theta) -- (i3);

\draw[->] (a1) -- (i1);
\draw[->] (a2) -- (i2);
\draw[->] (a3) -- (i3);

\node[intuition] at (0,-0.95) {Latent ability $\theta$ generates responses as a function of item parameters $(a_i,b_i)$ (e.g., discrimination and difficulty).};
\end{scope}

\begin{scope}[shift={(8.5,0)}]
\node at (0,2.3) {\shortstack{\textbf{(iii) Cognitive Diagnostic}\\\textbf{Model (CDM)}}};

\node[latent] (s1) at (-0.9,1.25) {$S_1$};
\node[latent] (s2) at (0,1.25) {$S_2$};
\node[latent] (s3) at (0.9,1.25) {$S_3$};

\node[item] (j1) at (-0.9,0.35) {$X_1$};
\node[item] (j2) at (0,0.35) {$X_2$};
\node[item] (j3) at (0.9,0.35) {$X_3$};

\draw[->] (s1) -- (j1);
\draw[->] (s2) -- (j1);
\draw[->] (s2) -- (j2);
\draw[->] (s3) -- (j2);
\draw[->] (s1) -- (j3);
\draw[->] (s3) -- (j3);

\node[intuition] at (0,-1) {Discrete skills $\{S_k\}$ (e.g., spelling, reasoning) generate responses, depending on which questions require which skills.};
\end{scope}

\begin{scope}[shift={(12.5,0)}]
\node at (0,2.3) {\shortstack{\textbf{(iv) Bayesian Network}\\\textbf{Skill Model (BNSM)}}};

\node[latent] (t1) at (-0.9,1.1) {$S_1$};
\node[latent] (t2) at (0,1.55) {$S_2$};
\node[latent] (t3) at (0.9,1.1) {$S_3$};

\draw[->] (t1) -- (t3);
\draw[->] (t2) -- (t3);

\node[item] (k1) at (-0.9,0.25) {$X_1$};
\node[item] (k2) at (0,0.25) {$X_2$};
\node[item] (k3) at (0.9,0.25) {$X_3$};

\draw[->] (t1) -- (k1);
\draw[->] (t3) -- (k1);
\draw[->] (t2) -- (k2);
\draw[->] (t3) -- (k2);
\draw[->] (t3) -- (k3);

\node[intuition] at (0,-1) {Skills form a directed hierarchy; mastery of some skills depends on others, and questions depend on certain skills.};
\end{scope}

\end{tikzpicture}

\caption{Structural assumptions across various theories of capability.}
\label{fig:irt-cdm-bn-structure}
\end{figure*}

\section{Rethinking Capability for AI Systems} \label{sec:do_not_apply_ai}

The CTT and IRT models are certainly reasonable starting points for the emergent field of AI evaluations. In this section, we outline other theories of capability that may better reflect AI's emergent abilities. However, since all these models were developed to describe \textit{human} behavior, we argue that these theories must be adapted to account for the unique ways AI introduces uncertainty.

\subsection{Alternative Theories of Capability}\label{sec:theories_capability}
We summarize the structural assumptions across various theories of capability in Figure~\ref{fig:irt-cdm-bn-structure}.

\begin{tcolorbox}[title=Theory 3: Cognitive Diagnostic Models (CDM),breakable]
CDMs represent capability as a vector of (typically) discrete skill masteries \citep{leighton2007cognitive}. Let $S\in\{0,1\}^K$ encode whether a test-taker has mastered each of $K$ underlying skills, and let $Q\in\{0,1\}^{m\times K}$ map items to the skills they require. In Figure~\ref{fig:irt-cdm-bn-structure}(c), we have $K=3$ skills and $m=3$ questions, and answering question $X_1$ correctly requires skills $S_1$ and $S_2$. A simple example is that $X_1$ is the question $3+4-1$, which requires two skills: $S_1$ (addition) and $S_2$ (subtraction). Concretely, the probability of correctness is given by 

\[
\Pr(Y_{ij}=1\mid S_j)
=
\begin{cases}
1-s_i, & \eta_{ij}=1,\\
g_i,   & \eta_{ij}=0.
\end{cases}
\]
Here $s_i$ is the \emph{slip probability}: even when the respondent has all required skills ($\eta_{ij} = 1$), they may answer incorrectly due to inattention or randomness. Conversely, $g_i$ is the \emph{guess probability}: even without the full skill set, the respondent may answer correctly by chance or via a shortcut.
\end{tcolorbox}

A central modeling choice in CDMs is how to aggregate the mastery vector to determine $\eta$. There are generally two choices:

\underline{\emph{(i) DINA (AND gate).}}
$
\eta_{ij}
\;:=\;
\prod_{k=1}^K S_{jk}^{\,Q_{ik}}
$

\underline{\emph{(ii) DINO (OR gate).}}
$
\eta_{ij}
\;:=\;
1-\prod_{k=1}^K (1-S_{jk})^{\,Q_{ik}}
$

Intuitively, DINA assumes that an item can be solved only if \textit{all} required skills are mastered. DINO, on the other hand, assumes that possessing \textit{any one} of the skills may be sufficient to answer correctly (e.g., there exists multiple solution strategies).

\begin{tcolorbox}[title=Theory 4:  Bayesian Network Skill Models (BNSM),breakable]
In intelligent tutoring systems, capability is often modeled as a structured latent state evolving over a graph of prerequisite relations. Skills form nodes of a Bayesian network, and each skill variable $S_k$ has a posterior $\Pr(S_k=1 \mid \text{data})$ updated by item responses \citep{culbertson2016bayesian}. Items provide noisy evidence via conditional probability distributions (CPDs), and inference propagates beliefs across the graph. This yields a capability representation that is \emph{structured}, accommodating both conceptual dependencies and temporal updates.
\end{tcolorbox}

\begin{tcolorbox}[title=Theory 5: Response time models (RT),breakable]
RT models not only assess whether a test-taker answers an item correctly, but also \emph{how long} they take to respond. The central idea is that response times carry information about latent cognitive processes such as fluency, effort, or deliberation. For item $i$ and respondent $j$, let $T_{ij}>0$ denote the observed response time. A widely used model \citep{vanderlinden2007joint} posits
\begin{equation}
    \log T_{ij}
\;=\;
\tau_i \;-\; \phi_i \,\zeta_j \;+\; \varepsilon_{ij},
\quad
\varepsilon_{ij}\sim \mathcal N(0,\sigma_i^2).
\label{eq:rt}
\end{equation}
Here $\zeta_j$ is a \emph{speed} parameter for respondent $j$, $\tau_i$ captures \emph{time intensity} (how long the item typically takes to solve), and $\phi_i$ is a \emph{time discrimination} parameter controlling how sensitive response time is to differences in speed. $\varepsilon_{ij}$ is a noise term. Response time models are often coupled with a standard item response model (e.g., CTT or IRT), with the latent speed $\zeta_j$ allowed to correlate with ability.
\end{tcolorbox}


We emphasize that this list is non-exhaustive. For example, there are 
nonparametric item response models such as Mokken scaling that drop 
parametric assumptions \citep{mokken1971theory, sijtsma2012mok}
and dynamic learning models such as Bayesian Knowledge Tracing and 
performance factor analysis that treat capability as evolving over time 
\citep{corbett1994kt, pavlik2009pfa}. 
In psychophysics and decision science, ability is operationalized via 
signal detection sensitivity or drift-diffusion parameters 
\citep{green1966sdt, ratcliff2008ddm}. 

\subsection{AI Introduces Systematic Variation}
In general, theories of capability share a common structure: latent ability generates some performance, and deviations from that structure are treated as random, independent across items, and independent of ability itself. In CTT this is explicit through Assumption~\ref{asn:ctt}; in IRT, CDM, and BNSM, it appears through the assumption that conditional on the latent trait, item responses are independent and error terms are unbiased. These assumptions are somewhat reasonable for human cognition, but they are \textbf{systematically violated by current AI systems.}

A large body of evidence shows that AI models often fail in ways that humans do not: they exhibit ``Potemkin'' or superficial understanding \citep{mancoridispotemkin}, brittle semantic generalization \citep{mizrahi2024state, sclar2023quantifying, zheng2023large}, and sensitivity to superficial rephrasings, distractors, and formatting \citep{zhuo2024prosa, errica2024did, du2022robustness}. Moreover, model behavior depends strongly on hyperparameters (e.g., temperature, top-$p$), inference-time strategies, and surrounding conversational context. These behaviors contradict the foundational assumption shared across human-centric capability theories: that errors represent noise rather than systematic, context-driven shifts in performance.

To illustrate, consider a more realistic model of AI capability \emph{within the CTT paradigm}:
\begin{equation}
    \phi_i = \theta_i + s(x_i) + r(h) + g(c) + \dots + \epsilon_i,
    \label{eq:ai_ability_full}
\end{equation}

where $x_i$ captures input features of item $i$ (e.g., phrasing or structure), $h$ denotes hyperparameters (e.g., sampling temperature), and $c$ encodes contextual or environmental variables (e.g., system prompts). The functions $s$, $r$, and $g$ represent \emph{systematic}, non-random performance shifts. When $x_i$, $h$, or $c$ vary across evaluation settings, these structured biases confound estimation of the underlying item-level capability $\theta_i$. The ellipsis indicates additional confounding factors. \textbf{We emphasize that all capability theories can be similarly adapted}, see Table~\ref{tab:theory_forms} for an example.

\subsection{Agentic and Reasoning Capabilities}\label{sec:agentic}
Reconsidering capability becomes more complex for ever-evolving AI systems. Consider, for instance, the response time (RT) model from Section~\ref{sec:theories_capability}. In human testing, response time is informative because it reflects internal cognitive processes such as deliberation, effort, or speed-accuracy tradeoffs, and is therefore meaningfully coupled with latent ability. For most contemporary AI systems, however, this assumption fails: latency in responses is often dominated by external factors such as hardware constraints, batching, or network delays, which are largely orthogonal to the model’s ability. As a result, RT does not play the same diagnostic role for AI as it does for human test takers.

That said, recent agentic or reasoning-enabled systems can explicitly choose when and how much to ``think'': for example, by allocating additional computation or producing intermediate reasoning steps for harder tasks. In such settings, a notion of response time—or more precisely, \emph{deliberation length or compute usage}—may once again become informative. Crucially, however, this form of ``time'' differs from its human analogue. Let $T_{ij}>0$ denote a measure of \textit{computational expenditure} when model $j$ answers item $i$, such as wall-clock latency, number of reasoning tokens, or FLOPs. We posit a log-normal model analogous to Eq.~\eqref{eq:rt}:
\[
\log T_{ij}
=
\tau_i + \phi_i \zeta_j + \beta^\top Z_{ij} + \varepsilon_{ij},
\quad
\varepsilon_{ij}\sim \mathcal N(0,\sigma_i^2).
\]

Here $\zeta_j$ is a model-specific \emph{deliberation propensity} capturing how aggressively the system allocates compute across tasks, $\tau_i$ captures the extent to which item $i$ elicits additional effort, and $\phi_i$ governs sensitivity to variation in deliberation propensity. The additional term $\beta^\top Z_{ij}$ absorbs structured systems-level contributions to latency (e.g., hardware, batch size, server load, or decoding settings).

In this model, $T_{ij}$ does not reflect cognitive processes, but is a product of strategic or architectural choices shaped by prompting, system policies, and resource constraints. This may be particularly useful for evaluators to analyze agentic AI systems' speed-accuracy tradeoffs, or their tendency to expend more computation than necessary.

\section{Proof-of-Concept: Input Sensitivity}\label{sec:proof_of_concept}
As before, we emphasize that our goal is not to identify a \textit{correct} theory of capability, but to show how different theories impose different assumptions and yield incomparable results.
In this section, we will operationalize this idea by addressing one issue that confounds evaluations: sensitivity to input perturbations. We proceed in two parts: \textit{(1)} start by specifying a theory of capability (here, we outline four); then \textit{(2)} from those theories, derive inference strategies to estimate the latent ability construct.





\subsection{Part 1: Theories of AI Capability}\label{sec:capability_proofofconcept}


Table~\ref{tab:theory_forms} presents several examples of theories of capability. Each theory requires Assumption~\ref{asn:mean_zero_perturbations}, which states that capability is recovered only \emph{in expectation} over the distribution of ``natural perturbations''. Without this assumption, latent traits become unidentifiable: the perturbation function $s(\cdot)$ can absorb item properties, trait parameters, or both.
\textbf{Explicitly stating the theory of capability makes these crucial assumptions transparent.}

\begin{table}[H]
\centering
\small
\begin{tabular}{p{1.8cm} p{5.5cm}}
\toprule
\textbf{Model family} & \textbf{Functional form} \\
\midrule

\textbf{CTT} 
&  $\phi_i = \theta_i + s(x_i) + \epsilon_i$ \\

\textbf{IRT (1-dim)} 
& $\phi_i = \sigma(\theta - b_i) + s(x_i) + \epsilon_i$ \\


\textbf{CDM} 
& $\phi_i = f_{\text{CDM}}(\alpha, Q_i) + s(x_i) + \epsilon_i$ \\

\textbf{BNSM} 
& $\phi_i = \Pr(\phi_i=1 \mid S, \text{graph}) + s(x_i) + \epsilon_i$ \\
\bottomrule
\end{tabular}
\caption{Functional forms of four capability theories, augmented with a perturbation term $s(x_i)$ capturing systematic shifts due to variations in input phrasing or structure.}
\label{tab:theory_forms}
\end{table}

\begin{assumption}[Mean-zero perturbations]
    Let $\mathcal{P}_i$ denote the distribution of natural perturbations of question $i$. Then
    \[
        \E_{x_i \sim \mathcal{P}_i}[s(x_i)] = 0.
    \]
    \label{asn:mean_zero_perturbations}
\end{assumption}

\textbf{Benchmark Curation Violates an Independence Assumption.} We focus on the CTT model for clarity, though a similar argument holds trivially for the other models in Table~\ref{tab:theory_forms}. Let $\mathcal{D}=\{x_i\}_{i=1}^n$ denote a benchmark. Conceptually, generating an item $x_i$ involves two stages:  

\textit{Stage 1 (Question sampling):} Draw a question or concept $i$ from a latent distribution over the task space $\mathbb{P}$.  

\textit{Stage 2 (Phrasing sampling):} Draw a natural phrasing $x_i$ for that question from the high-dimensional, unknown phrasing distribution $\mathcal{P}_i$.

Benchmark curators effectively control Stage 1 through their choice of questions, which can be viewed as independently sampled from an implicitly defined $\mathbb{P}$. However, curators do \emph{not} observe or sample from the true $\mathcal{P}_i$. In practice, each question receives only a single, hand-designed phrasing $x_i$, and these phrasings are produced by the same individuals or pipeline, introducing stylistic and structural dependencies across items. Thus, benchmarks almost always produce \emph{dependently sampled} draws from $\mathcal{P}_i$, violating Assumption~\ref{asn:mean_zero_perturbations}. This makes it impossible to identify $\theta_i$ under Table~\ref{tab:theory_forms} (top row), even though identification is trivial under the classical CTT model~\eqref{eq:ctt_i}. Empirically, this manifests as different or conflicting inferences on accuracy, see Appendix~\ref{sec:bias_experiment}.

\textbf{Perturbations for Pseudo-Independence.} A natural response to the dependence problem is to approximate $\mathcal{P}_i$ by generating multiple phrasings for each question. Prior work already follows this intuition: perturbing instructions \citep{mizrahi2024state}, question wording \citep{sclar2023quantifying}, or answer ordering \citep{zheng2023large} all implicitly aim to sample from a richer portion of $\mathcal{P}_i$. Our framework clarifies that these methods are attempts to recover identifiability by increasing coverage of the phrasing space.

Let $\tilde{\mathcal{D}}=\{\{x_{ij}\}_{j=1}^{m_i}\}_{i=1}^n$ be a perturbed benchmark, where each $x_{ij}$ is produced by a perturbation mechanism intended to approximate draws from $\mathcal{P}_i$. The CTT model then becomes $\phi_{ij}=\theta_i + s(x_{ij}) + \epsilon_{ij}$.
Although perturbation generators can never match the true (and fundamentally unknowable) $\mathcal{P}_i$, perturbations may improve identifiability, as in Proposition~\ref{prop:perturb_stronger}. Regardless, the perturbation mechanism remains a modeling decision.

\begin{figure*}[t]
\centering
\begin{tikzpicture}[
    node distance=6mm,
    skill/.style={rectangle,draw,thick,inner sep=1.5pt,minimum width=20mm,minimum height=6mm},
    task/.style={ellipse,draw,thick,inner sep=1pt,minimum width=10mm,minimum height=5mm},
    bnskill/.style={rectangle,draw=red,thick,inner sep=1.5pt,minimum width=24mm,minimum height=6mm,text=red},
    >=Latex
]


\node[skill] (spell)  at (-6.0,1.0) {Spelling};
\node[skill] (phon)   at (-3.0,1.0) {Phonology};
\node[skill] (cat)    at ( 0.0,1.0) {Categorization};
\node[skill] (soc)    at ( 3.3,1.0) {Social Reasoning};
\node[skill] (log)    at ( 6.3,1.0) {Logical Reasoning};


\node[bnskill] (V) at (-3.0,2.1) {Vocab/Lexical Retrieval};
\node[bnskill] (W) at ( 3.3,2.1) {World Knowledge};


\node[task] (FA)   [below=of spell,xshift=-6mm] {\textbf{FA}};
\node[task] (ML)   [below=of spell,xshift= 6mm] {\textbf{ML}};

\node[task] (RW)   [below=of phon]              {\textbf{RW}};

\node[task] (AWFC) [below=of cat,xshift=-7mm]   {\textbf{AWFC}};
\node[task] (MR)   [below=of cat,xshift= 7mm]   {\textbf{MR}};

\node[task] (CJ)   [below=of log,xshift=-10mm]  {\textbf{CJ}};
\node[task] (FF)   [below=of log,xshift= 10mm]  {\textbf{FF}};
\node[task] (S)    [below=of soc]               {\textbf{S}};


\draw[->] (spell) -- (FA);
\draw[->] (spell) -- (ML);

\draw[->] (phon) -- (RW);
\draw[->] (spell) -- (RW); 

\draw[->] (cat) -- (AWFC);
\draw[->] (cat) -- (MR);   
\draw[->] (soc) -- (MR);

\draw[->] (log) -- (FF);   

\draw[->] (log) -- (CJ);   

\draw[->] (soc) -- (S);    


\draw[->,very thick,red] (V) -- (spell);
\draw[->,very thick,red] (V) -- (cat);

\draw[->,very thick,red] (W) -- (CJ);
\draw[->,very thick,red] (W) -- (MR);
\draw[->,very thick,red] (W) -- (soc);

\end{tikzpicture}

\caption{
Skill structure for CDM (DINA) and BNSM. Black arrows show the
skill--task mapping. \textcolor{red}{Red nodes/arrows} show additional
latent structure used in BNSM (World Knowledge $W$ and Retrieval $R$).}
\label{fig:cdm-bn-unified}
\end{figure*}

\begin{figure*}
    \centering
    \includegraphics[width=0.95\linewidth]{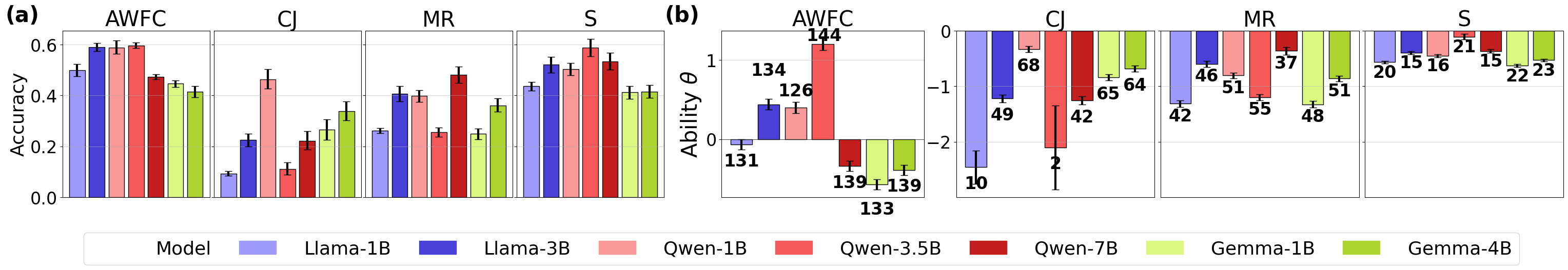}
    \includegraphics[width=0.95\linewidth]{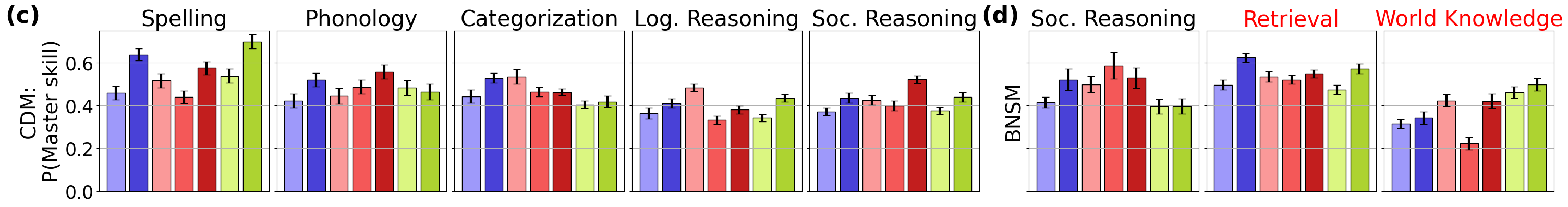}
    \caption{(a) Estimates of accuracy under CTT (Alg.~\ref{alg:bootstrap-ctt}), (b) Estimates of ability under IRT (Alg.~\ref{alg:adaptive-irt}), (c) and (d) Estimates of mastery of skills under CDM (Alg.~\ref{alg:cdm}) and BNSM (Alg.~\ref{alg:bnsm}), respectively, as specified by the structure in Fig.~\ref{fig:cdm-bn-unified}. \textcolor{red}{Red text} indicates skills defined only in BNSM. These theories and methods explicitly account for model sensitivity to perturbations, see Table~\ref{tab:theory_forms}. We test seven open-source LLMs on eight benchmark tasks across LMEntry and BBH. Numbers in bold indicate number of questions asked in the adaptive test. Only three BNSM skills are shown here for brevity since the other skills have similar inferences to CDM. Also for brevity, only four tasks are shown for CTT and IRT. See Appendix~\ref{appsec:infer_results} for full results.
    }
    \label{fig:inference_results_main}
\end{figure*}

\subsection{Part 2: Inference Strategies}\label{sec:infer_strategies}

Once a theory of capability is fixed, inference strategies often follow from standard statistical results. In this section, we demonstrate how different theories of capability naturally induce different estimands and inference pipelines.

\textbf{CTT.}
To estimate $\theta$ under prompt perturbations, we use a \emph{clustered bootstrap}, treating items as clusters and perturbations as observations within clusters \citep{ren2010nonparametric, field2007bootstrapping}. This yields uncertainty over average accuracy without explicitly modeling variance. See Algorithm~\ref{alg:bootstrap-ctt} in Appendix~\ref{appsec:detail_algorithms}.

\textbf{IRT.}
Given item parameters $(a_i, b_i)$, estimating ability $\theta$ reduces to classical IRT inference via Fisher scoring / Newton-Raphson updates \citep{raykov2011introduction}. Prompt perturbations are handled as clustered observations per item, and adaptive item selection can be used to reduce sample complexity. See Algorithm~\ref{alg:adaptive-irt} in Appendix~\ref{appsec:detail_algorithms}.

\textbf{CDM.}
We relax the binary skill assumption and allow $\alpha \in \mathbb{R}^K$. Inference then reduces to penalized maximum a posteriori (MAP) estimation under a logistic likelihood and a Gaussian prior. Uncertainty is obtained via item-level bootstrapping. See Algorithm~\ref{alg:cdm} in Appendix~\ref{appsec:detail_algorithms}.

\textbf{BNSM.}
We model skills as correlated continuous latent variables in a Bayesian network. Aggregated perturbations provide item-level evidence, and posterior inference yields calibrated skill estimates with bootstrap uncertainty. See Algorithm~\ref{alg:bnsm} in Appendix~\ref{appsec:detail_algorithms}.

\subsection{Empirical Study}\label{sec:experiments}
\textbf{Setup.} We evaluate seven open-source instruction-tuned LLMs (Llama-3.2, Qwen-2.5, and Gemma) on two benchmarks, Big-Bench Hard (BBH) \citep{suzgun2023challenging} and LMEntry \citep{efrat2023lmentry}, both of which have perturbed versions from \citep{mizrahi2024state}. Each dataset contains sub-tasks testing different concepts, and we use four from each category. For LMEntry, we use \texttt{any word from category} (\textbf{AWFC}), \texttt{first alphabetically} (\textbf{FA}), \texttt{more letters} (\textbf{ML}), and \texttt{rhyming word} (\textbf{RW}). For BBH, we use \texttt{causal judgment} (\textbf{CJ}), \texttt{movie recommendation} (\textbf{MR}), \texttt{formal fallacies} (\textbf{FF}), and \texttt{snarks} (\textbf{S}). See Appendix~\ref{appsec:main_bias_exp} for details on the tasks, perturbations, and evaluation procedure. 

\textbf{Skill structure.} For CDM and BNSM, we specify a skill mapping in Figure~\ref{fig:cdm-bn-unified}, assuming a DINA model: an item is answered correctly only if \emph{all} required skills are mastered. This is a simple choice of many, but serves as a demonstration of the flexibility of CDM and BDSM.



\textbf{Results.} Figure~\ref{fig:inference_results_main} show estimates of capability from the four methods introduced in Section~\ref{sec:infer_strategies} over seven LLMs. We show the full suite of results in Appendix~\ref{appsec:infer_results}. Generally, the model rankings are consistent between CTT and IRT. However, IRT yields more separation between models when CTT cannot, and using fewer samples (in \textbf{bold}). For example, \texttt{Qwen-3.5B} on \textbf{AWFC} benchmark has the highest inferred performance using both methods, but IRT infers much higher ability for that model compared to the rest because the model performs well on harder questions. We also observe that high accuracy (CTT) does not necessarily translate to high ability (IRT). \texttt{Qwen-3.5B} achieves the same accuracy on \textbf{AWFC} and \textbf{S}, but has a lower ability score on \textbf{S}. This is because \textbf{AWFC} comprises harder questions, and performing well on harder questions indicates higher ability.

Meanwhile, CDM and BNSM can both report \emph{skill-level} capability by pooling evidence across tasks according to the assumed skill structure (Figure~\ref{fig:cdm-bn-unified}). While most of the inferences are similar, BNSM infers markedly higher Social Reasoning scores ($Soc$) compared to CDM. This is because, in CDM, the only way to explain poor performance on \textbf{MR} is to reduce the mastery of the $Soc$ skill. In contrast, BNSM introduces an additional latent skill World Knowledge ($W$), which loads directly on \textbf{MR, CJ}, and also influences $Soc$. As a result, when \textbf{MR, CJ} are systematically low performing while \textbf{S} provides comparatively stronger evidence for $Soc$, the posterior in BNSM can attribute a larger fraction of the \textbf{MR/CJ} errors to low $W$ rather than to low $Soc$.

\section{Guidelines for Evaluators}\label{sec:guidelines}

We have shown that evaluators have significant degrees of freedom to choose a theory of capability for AI evaluations. In the face of such discretion, we propose that evaluators should carefully justify and report the following decisions in an \textbf{Evaluation Card.}

In Table~\ref{tab:guidelines_capability}, we illustrate how different theories of capability we outlined in this paper align with the modeling assumptions and decisions evaluators must make. We note, however, that these theories are by no means an exhaustive list.


\begin{tcolorbox}[title=Evaluation Card,
colframe=blue!70!black]

\textcolor{blue}{\textsf{1. Meaning of Capability.}} 

\emph{What should a higher score to mean?}

Different theories define different capability constructs, which are not directly comparable. For example, while CTT simply reports average performance, CDM/BNSM yields an estimate of mastery over a set of researcher-specified skills.

\par\noindent\rule{\linewidth}{0.2pt}\par

\textcolor{blue}{\textsf{2. Task or Domain Structure.}} 

\emph{Do tasks admit a meaningful latent structure, and if so, what kind?}

For example, a benchmark might contain questions that implicitly test different sets of skills, such as a combination of spelling, phonology, and causal reasoning. Simply reporting accuracy (under CTT) or ability (under IRT) without imposing structure on the types of skills each question tests might obscure a natural interpretation of benchmark performance.


\par\noindent\rule{\linewidth}{0.2pt}\par

\textcolor{blue}{\textsf{3. Sources of Systematic Variation.}} 

\emph{Which factors are averaged over or held fixed?}

Evaluators must decide which sources of variation are explicitly modeled, and which sources are noise or conditioned upon. For example, in Section~\ref{sec:proof_of_concept}, we explicitly accounted for finite-sample uncertainty and sensitivity to perturbations. However, we conditioned on the same inference-time strategy (i.e., constant hyperparameters and no best-of-N sampling or reasoning) and contextual environment (i.e., no influence of long prior chats).

\par\noindent\rule{\linewidth}{0.2pt}\par
\textcolor{blue}{\textsf{4. Robustness Checks.}}

Where feasible, conduct evaluations under other plausible theories of capability to assess whether the resulting inferences remain stable across different assumptions about model capabilities. When results differ, identify the discrepancies and propose possible explanations for them.
\end{tcolorbox}

\begin{table*}[t]
\centering \small
\begin{tabular}{p{7cm}p{8.2cm}}
\toprule
\textbf{Aspect} & \textbf{Implication on choice of theory} \\
\midrule

\textcolor{blue}{\textsf{1. Meaning of Capability.}} 

What should a higher score mean? 

\emph{Scores from different theories represent distinct constructs and are not directly comparable.}
& \textbf{CTT:} average performance on a fixed item set. \newline
  \textbf{IRT:} latent continuous ability relative to item parameters. \newline
  \textbf{CDM/BNSM:} mastery of interpretable skills. \newline 
  \textbf{RT:} effort (e.g., reasoning) is also important on top of ability\\

\midrule

\textcolor{blue}{\textsf{2. Task or Domain Structure.}} 

Do tasks admit a meaningful latent structure?

\emph{Imposing unjustified structure risks misspecification.} 
& \textbf{No structure:} treat items as exchangeable (CTT). \newline
  \textbf{Items vary in difficulty and discrimination:} (IRT). \newline
  \textbf{Items test varying skills:} (CDM/BNSM).\newline
    \hspace*{1em} \emph{All skills required to solve question}: DINA \newline
    \hspace*{1em} \emph{At least one skill required to solve question}: DINO\\

\midrule

\textcolor{blue}{\textsf{3. Sources of Systematic Variation.}}

Which sources of variation are noise vs.\ structure?

\emph{Evaluators must decide which factors are absorbed, modeled, or conditioned upon.} 
& \textbf{Humans:} phrasing, context, and other factors can be treated as mean-zero noise. \newline
  \textbf{AI systems:} exhibit systematic variation from prompts, context, hyperparameters, inference-time methods, etc. \\

\midrule

\underline{\textsf{Data considerations may restrict choices.}}

What is realistically identifiable with available data?

\emph{Richer theories trade data complexity for stronger assumptions.}
& \textbf{CTT:} no additional data required \newline
  \textbf{IRT:} requires calibrated item parameters, but enables adaptive testing to reduce sample complexity. \newline
  \textbf{CDM/BNSM:} requires strong priors on item-to-skill mapping or dependency graph. \\

\bottomrule
\end{tabular}
\caption{Practical considerations for choosing a theory of capability. Each aspect corresponds to a modeling commitment that determines how evaluation scores should be interpreted and compared.}
\label{tab:guidelines_capability}
\end{table*}


\section{Alternative Views}

\textbf{\emph{Alternative View 1:} Benchmarks as engineering heuristics rather than inference tasks.}
One alternative view is that benchmarks should primarily support rapid iteration, rather than as thorough, statistical inference tasks. This view aligns with prior works advocating that benchmarks should function as operational tools embedded in complex ML pipelines, where simplicity and standardization are critical for joint progress \citep{sculley2018winner, dodge2019show}. Accuracy-based aggregation is attractive precisely because it is easy to compute and communicate. Thus, ambiguity about what is being measured is seen as an acceptable trade-off so long as the benchmark correlates with downstream performance or developer intuition.

\textbf{Our response.} This perspective is directly in tension with our emphasis on explicit theories of capability. Our response is not that heuristic benchmarks are illegitimate, but that their use implicitly commits evaluators to assumptions about what variation is meaningful versus noise. Making those assumptions explicit need not preclude fast iteration, but can improve transparency and reduce misinterpretation when benchmark results are used beyond their original context.


\textbf{\emph{Alternative View 2:} Prioritizing downstream validity for benchmarks.}
A second view prioritizes \emph{downstream validity} over the interpretability of evaluation scores. Under this perspective, the primary goal of benchmarking is to predict real-world outcomes—such as deployment failures, safety incidents, or user satisfaction—rather than to recover latent skills with interpretable meaning. This position reflects longstanding concerns that benchmark scores often fail to transfer cleanly to real-world use, motivating calls for evaluations that better reflect deployment conditions and external validity \citep{raji2021benchmark, gebru2021datasheets}. 

\textbf{Our response.} We agree that internal interpretability should \emph{not replace} downstream validity, but the two should serve complementary roles. When benchmark scores are used to make scientific claims about progress, unclear inferences can obscure failure modes and inflate confidence. Explicit capability models can clarify what is—and is not—being predicted, even when downstream performance remains the ultimate arbiter of success.

\section{Future Work and Research Directions}\label{sec:limitations}



We conclude by outlining several research directions in the emerging science of AI evaluation.

\textbf{A taxonomy of confounders.} In Section~\ref{sec:do_not_apply_ai}, we posited several sources of variation that confound evaluations, such as hyperparameters, inference-time strategies, and prompt formatting. However, this is by no means a comprehensive list. Developing a systematic taxonomy of these confounders would help researchers more clearly articulate and reason about the sources of uncertainty their evaluations must address. More broadly, this suggests viewing benchmarks not as fixed datasets, but as experimental protocols whose degrees of freedom must be explicitly modeled.


\textbf{Norms for evaluation claims and reuse.}
Our work points to a need for clearer norms around how evaluation results are reported and reused. Benchmark scores are frequently repurposed beyond their original context---to make claims about generality, safety, or progress---without revisiting the assumptions under which they were produced. While these norms have been partially established in the predictive setting \cite{gebru2021datasheets}, more work is needed to update these norms in light of of general-purpose models and modern benchmarking practices -- perhaps using the lessons we established in Section~\ref{sec:guidelines} as a starting point.


\textbf{Evolving theories of capability for generative models.}
As we have argued throughout, choosing a theory of capability implicitly encodes assumptions about how AI systems behave. These assumptions were historically motivated by human cognition---not generative models---and may be misspecified or incomplete. As our understanding of model behavior improves, so too must the underlying theories. For example, defining constructs such as item difficulty or latent ability raises philosophical questions: what does it mean for a problem to be intrinsically difficult for an AI system, and should difficulty be benchmark-relative or universal?

Recent work suggests the possibility of a uni-dimensional intelligence factor for LLMs \citep{ilic2024evidence}, prompting new questions about how such a quantity should be tested and robustly measured. We view our framework as contributing to a broader effort to build the science of benchmarks \citep{hardt2025emerging} and the measurement theory of artificial general intelligence \citep{mitchell2024debates}.

\bibliography{bib}

@article{field2007bootstrapping,
  title={Bootstrapping clustered data},
  author={Field, Christopher A and Welsh, Alan H},
  journal={Journal of the Royal Statistical Society Series B: Statistical Methodology},
  volume={69},
  number={3},
  pages={369--390},
  year={2007},
  publisher={Oxford University Press}
}

@article{ren2010nonparametric,
  title={Nonparametric bootstrapping for hierarchical data},
  author={Ren, Shiquan and Lai, Hong and Tong, Wenjing and Aminzadeh, Mostafa and Hou, Xuezhang and Lai, Shenghan},
  journal={Journal of Applied Statistics},
  volume={37},
  number={9},
  pages={1487--1498},
  year={2010},
  publisher={Taylor \& Francis}
}

@misc{du2022robustness,
  title={Robustness Challenges of Large Language Models in Natural Language Understanding: A Survey},
  author={Du, Mengnan and He, Fengxiang and Zou, Na and Tao, Dacheng and Hu, Xia},
  year={2022},
  publisher={arXiv}
}

@article{mizrahi2024state,
  title={State of what art? a call for multi-prompt llm evaluation},
  author={Mizrahi, Moran and Kaplan, Guy and Malkin, Dan and Dror, Rotem and Shahaf, Dafna and Stanovsky, Gabriel},
  journal={Transactions of the Association for Computational Linguistics},
  volume={12},
  pages={933--949},
  year={2024},
  publisher={MIT Press 255 Main Street, 9th Floor, Cambridge, Massachusetts 02142, USA~…}
}

@article{zheng2023large,
  title={Large language models are not robust multiple choice selectors},
  author={Zheng, Chujie and Zhou, Hao and Meng, Fandong and Zhou, Jie and Huang, Minlie},
  journal={arXiv preprint arXiv:2309.03882},
  year={2023}
}

@article{miller2024adding,
  title={Adding error bars to evals: A statistical approach to language model evaluations},
  author={Miller, Evan},
  journal={arXiv preprint arXiv:2411.00640},
  year={2024}
}

@article{wang2023evaluating,
  title={Evaluating general-purpose ai with psychometrics},
  author={Wang, Xiting and Jiang, Liming and Hernandez-Orallo, Jose and Stillwell, David and Sun, Luning and Luo, Fang and Xie, Xing},
  journal={arXiv preprint arXiv:2310.16379},
  year={2023}
}

@InProceedings{polo2024tiny,
  title = 	 {tiny{B}enchmarks: evaluating {LLM}s with fewer examples},
  author =       {Maia Polo, Felipe and Weber, Lucas and Choshen, Leshem and Sun, Yuekai and Xu, Gongjun and Yurochkin, Mikhail},
  booktitle = 	 {Proceedings of the 41st International Conference on Machine Learning},
  pages = 	 {34303--34326},
  year = 	 {2024},
  editor = 	 {Salakhutdinov, Ruslan and Kolter, Zico and Heller, Katherine and Weller, Adrian and Oliver, Nuria and Scarlett, Jonathan and Berkenkamp, Felix},
  volume = 	 {235},
  series = 	 {Proceedings of Machine Learning Research},
  month = 	 {21--27 Jul},
  publisher =    {PMLR},
  pdf = 	 {https://raw.githubusercontent.com/mlresearch/v235/main/assets/maia-polo24a/maia-polo24a.pdf},
  url = 	 {https://proceedings.mlr.press/v235/maia-polo24a.html},
  abstract = 	 {The versatility of large language models (LLMs) led to the creation of diverse benchmarks that thoroughly test a variety of language models’ abilities. These benchmarks consist of tens of thousands of examples making evaluation of LLMs very expensive. In this paper, we investigate strategies to reduce the number of evaluations needed to assess the performance of an LLM on several key benchmarks. For example, we show that to accurately estimate the performance of an LLM on MMLU, a popular multiple-choice QA benchmark consisting of 14K examples, it is sufficient to evaluate this LLM on 100 curated examples. We release evaluation tools and tiny versions of popular benchmarks: Open LLM Leaderboard, MMLU, HELM, and AlpacaEval 2.0. Our empirical analysis demonstrates that these tools and tiny benchmarks are sufficient to reliably and efficiently reproduce the original evaluation results.}
}

@misc{mitchell2023we,
  title={How do we know how smart AI systems are?},
  author={Mitchell, Melanie},
  journal={Science},
  volume={381},
  number={6654},
  pages={eadj5957},
  year={2023},
  publisher={American Association for the Advancement of Science}
}

@article{zhuang2023static,
  title={From Static Benchmarks to Adaptive Testing: Psychometrics in AI Evaluation},
  author={Zhuang, Yan and Liu, Qi and Ning, Yuting and Huang, Weizhe and Pardos, Zachary A and Kyllonen, Patrick C and Zu, Jiyun and Mao, Qingyang and Lv, Rui and Huang, Zhenya and others},
  journal={arXiv preprint arXiv:2306.10512},
  year={2023}
}

@inproceedings{efrat2023lmentry,
  title={LMentry: A Language Model Benchmark of Elementary Language Tasks},
  author={Efrat, Avia and Honovich, Or and Levy, Omer},
  booktitle={Findings of the Association for Computational Linguistics: ACL 2023},
  pages={10476--10501},
  year={2023}
}

@inproceedings{suzgun2023challenging,
  title={Challenging BIG-Bench Tasks and Whether Chain-of-Thought Can Solve Them},
  author={Suzgun, Mirac and Scales, Nathan and Sch{\"a}rli, Nathanael and Gehrmann, Sebastian and Tay, Yi and Chung, Hyung Won and Chowdhery, Aakanksha and Le, Quoc and Chi, Ed and Zhou, Denny and others},
  booktitle={Findings of the Association for Computational Linguistics: ACL 2023},
  pages={13003--13051},
  year={2023}
}

@article{
liang2023holistic,
title={Holistic Evaluation of Language Models},
author={Percy Liang and Rishi Bommasani and Tony Lee and Dimitris Tsipras and Dilara Soylu and Michihiro Yasunaga and Yian Zhang and Deepak Narayanan and Yuhuai Wu and Ananya Kumar and others},
journal={Transactions on Machine Learning Research},
issn={2835-8856},
year={2023},
url={https://openreview.net/forum?id=iO4LZibEqW},
note={Featured Certification, Expert Certification, Outstanding Certification}
}

@article{thissen2020intellectual,
  title={An intellectual history of parametric item response theory models in the twentieth century},
  author={Thissen, David and Steinberg, Lynne},
  journal={Chinese/English Journal of Educational Measurement and Evaluation},
  volume={1},
  number={1},
  pages={5},
  year={2020}
}

@misc{collegeboard2025scores,
  author       = {{College Board}},
  title        = {How Scores Are Calculated},
  year         = {2025},
  url          = {https://satsuite.collegeboard.org/scores/what-scores-mean/how-scores-calculated},
  note         = {Accessed: 2025-05-14}
}

@book{raykov2011introduction,
  title={Introduction to psychometric theory},
  author={Raykov, Tenko and Marcoulides, George A},
  year={2011},
  publisher={Routledge}
}

@book{baker2004item,
  title={Item response theory: Parameter estimation techniques},
  author={Baker, Frank B and Kim, Seock-Ho},
  year={2004},
  publisher={CRC press}
}

@incollection{neyman1992two,
  title={On the two different aspects of the representative method: the method of stratified sampling and the method of purposive selection},
  author={Neyman, Jerzy},
  booktitle={Breakthroughs in statistics: Methodology and distribution},
  pages={123--150},
  year={1992},
  publisher={Springer}
}

@inproceedings{
hendrycks2021measuring,
title={Measuring Massive Multitask Language Understanding},
author={Dan Hendrycks and Collin Burns and Steven Basart and Andy Zou and Mantas Mazeika and Dawn Song and Jacob Steinhardt},
booktitle={International Conference on Learning Representations},
year={2021},
url={https://openreview.net/forum?id=d7KBjmI3GmQ}
}

@article{eriksson2025can,
  title={Can We Trust AI Benchmarks? An Interdisciplinary Review of Current Issues in AI Evaluation},
  author={Eriksson, Maria and Purificato, Erasmo and Noroozian, Arman and Vinagre, Joao and Chaslot, Guillaume and Gomez, Emilia and Fernandez-Llorca, David},
  journal={arXiv preprint arXiv:2502.06559},
  year={2025}
}

@inproceedings{chiang2024chatbot,
  title={Chatbot arena: An open platform for evaluating llms by human preference},
  author={Chiang, Wei-Lin and Zheng, Lianmin and Sheng, Ying and Angelopoulos, Anastasios Nikolas and Li, Tianle and Li, Dacheng and Zhu, Banghua and Zhang, Hao and Jordan, Michael and Gonzalez, Joseph E and others},
  booktitle={Forty-first International Conference on Machine Learning},
  year={2024}
}

@article{zhuo2024prosa,
  title={ProSA: Assessing and understanding the prompt sensitivity of LLMs},
  author={Zhuo, Jingming and Zhang, Songyang and Fang, Xinyu and Duan, Haodong and Lin, Dahua and Chen, Kai},
  journal={arXiv preprint arXiv:2410.12405},
  year={2024}
}

@article{sclar2023quantifying,
  title={Quantifying Language Models' Sensitivity to Spurious Features in Prompt Design or: How I learned to start worrying about prompt formatting},
  author={Sclar, Melanie and Choi, Yejin and Tsvetkov, Yulia and Suhr, Alane},
  journal={arXiv preprint arXiv:2310.11324},
  year={2023}
}

@article{errica2024did,
  title={What did I do wrong? quantifying LLMs' sensitivity and consistency to prompt engineering},
  author={Errica, Federico and Siracusano, Giuseppe and Sanvito, Davide and Bifulco, Roberto},
  journal={arXiv preprint arXiv:2406.12334},
  year={2024}
}

@book{lord2008statistical,
  title={Statistical theories of mental test scores},
  author={Lord, Frederic M and Novick, Melvin R},
  year={2008},
  publisher={IAP}
}

@article{zhong2023agieval,
  title={Agieval: A human-centric benchmark for evaluating foundation models},
  author={Zhong, Wanjun and Cui, Ruixiang and Guo, Yiduo and Liang, Yaobo and Lu, Shuai and Wang, Yanlin and Saied, Amin and Chen, Weizhu and Duan, Nan},
  journal={arXiv preprint arXiv:2304.06364},
  year={2023}
}

@article{hernandez2021general,
  title={General intelligence disentangled via a generality metric for natural and artificial intelligence},
  author={Hern{\'a}ndez-Orallo, Jos{\'e} and Loe, Bao Sheng and Cheke, Lucy and Mart{\'\i}nez-Plumed, Fernando and {\'O} h{\'E}igeartaigh, Se{\'a}n},
  journal={Scientific reports},
  volume={11},
  number={1},
  pages={22822},
  year={2021},
  publisher={Nature Publishing Group UK London}
}

@article{chollet2019measure,
  title={On the measure of intelligence},
  author={Chollet, Fran{\c{c}}ois},
  journal={arXiv preprint arXiv:1911.01547},
  year={2019}
}

@misc{mitchell2024debates,
  title={Debates on the nature of artificial general intelligence},
  author={Mitchell, Melanie},
  journal={Science},
  volume={383},
  number={6689},
  pages={eado7069},
  year={2024},
  publisher={American Association for the Advancement of Science}
}

@article{ilic2024evidence,
  title={Evidence of interrelated cognitive-like capabilities in large language models: Indications of artificial general intelligence or achievement?},
  author={Ili{\'c}, David and Gignac, Gilles E},
  journal={Intelligence},
  volume={106},
  pages={101858},
  year={2024},
  publisher={Elsevier}
}

@misc{hardt2025emerging,
  author = {Moritz Hardt},
  title = {The Emerging Science of Machine Learning Benchmarks},
  year = {2025},
  howpublished = {Online at \url{https://mlbenchmarks.org}},
  note = {Manuscript}
}

@article{jo2026subjectivity,
  title={The Subjectivity of Monoculture},
  author={Jo, Nathanael and Garg, Nikhil and Raghavan, Manish},
  journal={arXiv preprint arXiv:2602.24086},
  year={2026}
}

@inproceedings{rodriguez2022clustering,
  title={Clustering examples in multi-dataset benchmarks with item response theory},
  author={Rodriguez, Pedro and Htut, Phu Mon and Lalor, John P and Sedoc, Jo{\~a}o},
  booktitle={Proceedings of the Third Workshop on Insights from Negative Results in NLP},
  pages={100--112},
  year={2022}
}

@article{castleman2025rethinking,
  title={Rethinking Math Benchmarks for LLMs using IRT},
  author={Castleman, Jane and Nadeem, Nimra and Namjoshi, Tanvi and Liu, Lydia T},
  journal={Proceedings of Machine Learning Research},
  volume={273},
  pages={66--82},
  year={2025}
}

@article{zhou2025lost,
  title={Lost in Benchmarks? Rethinking Large Language Model Benchmarking with Item Response Theory},
  author={Zhou, Hongli and Huang, Hui and Zhao, Ziqing and Han, Lvyuan and Wang, Huicheng and Chen, Kehai and Yang, Muyun and Bao, Wei and Dong, Jian and Xu, Bing and others},
  journal={arXiv preprint arXiv:2505.15055},
  year={2025}
}

@inproceedings{xu2025fairness,
  title={Fairness evaluation with item response theory},
  author={Xu, Ziqi and Kandanaarachchi, Sevvandi and Ong, Cheng Soon and Ntoutsi, Eirini},
  booktitle={Proceedings of the ACM on Web Conference 2025},
  pages={2276--2288},
  year={2025}
}

@inproceedings{schilling2025lifting,
  title={Lifting the benchmark iceberg with item-response theory},
  author={Schilling-Wilhelmi, Mara and Alampara, Nawaf and Jablonka, Kevin Maik},
year={2025},
  booktitle={AI for Accelerated Materials Design-ICLR 2025}
}

@article{martinez2019item,
  title={Item response theory in AI: Analysing machine learning classifiers at the instance level},
  author={Mart{\'\i}nez-Plumed, Fernando and Prud{\^e}ncio, Ricardo BC and Mart{\'\i}nez-Us{\'o}, Adolfo and Hern{\'a}ndez-Orallo, Jos{\'e}},
  journal={Artificial intelligence},
  volume={271},
  pages={18--42},
  year={2019},
  publisher={Elsevier}
}

@inproceedings{raji2021benchmark,
 author = {Raji, Deborah and Denton, Emily and Bender, Emily M. and Hanna, Alex and Paullada, Amandalynne},
 booktitle = {Proceedings of the Neural Information Processing Systems Track on Datasets and Benchmarks},
 editor = {J. Vanschoren and S. Yeung},
 pages = {},
 title = {AI and the Everything in the Whole Wide World Benchmark},
 volume = {1},
 year = {2021}
}

@article{wang2024mmlu,
  title={Mmlu-pro: A more robust and challenging multi-task language understanding benchmark},
  author={Wang, Yubo and Ma, Xueguang and Zhang, Ge and Ni, Yuansheng and Chandra, Abhranil and Guo, Shiguang and Ren, Weiming and Arulraj, Aaran and He, Xuan and Jiang, Ziyan and others},
  journal={Advances in Neural Information Processing Systems},
  volume={37},
  pages={95266--95290},
  year={2024}
}

@article{vanderlinden2007joint,
  title={A hierarchical framework for modeling speed and accuracy on test items},
  author={Van der Linden, Wim J},
  journal={Psychometrika},
  volume={72},
  number={3},
  pages={287--308},
  year={2007},
  publisher={Springer}
}

@inproceedings{dodge2019show,
  title={Show Your Work: Improved Reporting of Experimental Results},
  author={Dodge, Jesse and Gururangan, Suchin and Card, Dallas and Schwartz, Roy and Smith, Noah A},
  booktitle={Proceedings of the 2019 Conference on Empirical Methods in Natural Language Processing and the 9th International Joint Conference on Natural Language Processing (EMNLP-IJCNLP)},
  pages={2185--2194},
  year={2019}
}

@inproceedings{sculley2018winner,
  title={Winner's curse? On pace, progress, and empirical rigor},
  author={Snoek, Jasper and Wiltschko, Alex and Rahimi, Ali},
year={2018},
booktitle={International Conference on Learning Representations (ICLR) Workshop Track}
}

@article{gebru2021datasheets,
  title={Datasheets for datasets},
  author={Gebru, Timnit and Morgenstern, Jamie and Vecchione, Briana and Vaughan, Jennifer Wortman and Wallach, Hanna and Iii, Hal Daum{\'e} and Crawford, Kate},
  journal={Communications of the ACM},
  volume={64},
  number={12},
  pages={86--92},
  year={2021},
  publisher={ACM New York, NY, USA}
}

@book{mokken1971theory,
  title={A Theory and Procedure of Scale Analysis},
  author={Mokken, R. J.},
  year={1971},
  publisher={De Gruyter}
}

@article{sijtsma2012mok,
  title={Mokken Scale Analysis as a Tool for Scale Development},
  author={Sijtsma, Klaas and Molenaar, Ivo W.},
  journal={Psychological Methods},
  year={2012}
}

@inproceedings{corbett1994kt,
  title={Knowledge Tracing: Modeling the Acquisition of Procedural Knowledge},
  author={Corbett, Albert and Anderson, John},
  booktitle={User Modeling},
  year={1994}
}

@article{pavlik2009pfa,
  title={Performance Factors Analysis—A New Alternative to Knowledge Tracing},
  author={Pavlik, Philip I. and Cen, Hao and Koedinger, Kenneth},
  journal={Proceedings of the 14th International Conference on Artificial Intelligence in Education},
  year={2009}
}

@book{green1966sdt,
  title={Signal Detection Theory and Psychophysics},
  author={Green, David M. and Swets, John A.},
  year={1966},
  publisher={Wiley}
}

@article{ratcliff2008ddm,
  title={The Diffusion Decision Model: Theory and Data for Two-Choice Decision Tasks},
  author={Ratcliff, Roger and McKoon, Gail},
  journal={Neural Computation},
  year={2008}
}

@inproceedings{mancoridispotemkin,
  title={Potemkin Understanding in Large Language Models},
  author={Mancoridis, Marina and Weeks, Bec and Vafa, Keyon and Mullainathan, Sendhil},
  booktitle={Forty-second International Conference on Machine Learning},
year={2025}
}

@article{bowman2023eight,
  title={Eight Things to Know about Large Language Models},
  author={Bowman, Samuel R},
  journal={arXiv preprint arXiv:2304.00612},
  year={2023}
}

@book{lord1980applications,
  title={Applications of Item Response Theory to Practical Testing Problems},
  author={Lord, Frederic M},
  year={1980},
  publisher={Routledge}
}

@inproceedings{dehghani2021benchmarking,
  title={Benchmarking Neural Network Robustness to Distribution Shifts},
  author={Dehghani, Mostafa and others},
  booktitle={ICLR},
  year={2021}
}

@article{ilyas2019adversarial,
  title={Adversarial examples are not bugs, they are features},
  author={Ilyas, Andrew and others},
  journal={NeurIPS},
  year={2019}
}

@inproceedings{kiela2021dynabench,
  title={Dynabench: Rethinking Benchmarking in NLP},
  author={Kiela, Douwe and others},
  booktitle={NAACL},
  year={2021}
}

@article{culbertson2016bayesian,
  title={Bayesian networks in educational assessment: The state of the field},
  author={Culbertson, Michael J},
  journal={Applied psychological measurement},
  volume={40},
  number={1},
  pages={3--21},
  year={2016},
  publisher={SAGE Publications Sage CA: Los Angeles, CA}
}

@book{leighton2007cognitive,
  title={Cognitive diagnostic assessment for education: Theory and applications},
  author={Leighton, Jacqueline and Gierl, Mark},
  year={2007},
  publisher={Cambridge University Press}
}
\bibliographystyle{icml2026}

\newpage
\appendix
\onecolumn

\section{Other Related Work}\label{appsec:related_work}

\textbf{Perspectives on AI benchmarking.}  
A broad literature highlights conceptual and methodological gaps in how AI evaluations are designed and interpreted. Psychometrics provides a mature foundation for formalizing constructs such as \textit{validity} -- that evaluations properly measure a construct of capability -- and \textit{reliability} -- that evaluations yield measures that are replicable and consistent \citep{lord1980applications, raykov2011introduction}. Several recent works argue that AI benchmarking would benefit from similar principles \citep{wang2023evaluating, raji2021benchmark}, while others critique how benchmarks are built, saturated, and deployed \citep{kiela2021dynabench, bowman2023eight, dehghani2021benchmarking}. Our contribution extends this line of work by making capability assumptions explicit, modeling the sources of uncertainty that undermine validity, and treating benchmark evaluation as a principled inference task.

\textbf{Skills and artificial general intelligence.}  
There is increasing interest in quantifying artificial general intelligence, which could mean either high performance across diverse tasks \citep{hernandez2021general} or the ability to accumulate and recombine skills \citep{chollet2019measure}. Many probes on this question often focus on large composite benchmarks such as MMLU \citep{hendrycks2021measuring} and AGIEval \cite{zhong2023agieval}, without consideration on how specific questions or tasks map into some latent notion of intelligence. 
Our framework clarifies how different capability models encode a particular skill structure, offering a principled link between benchmark questions/tasks and latent skill hierarchies relevant to discussions of artificial general intelligence.

\textbf{Robustness.}  
Across modalities, modern AI systems exhibit marked sensitivity to even \emph{non-adversarial} perturbations. In vision, small semantic-preserving changes can shift predictions due to reliance on non-robust features \citep{ilyas2019adversarial}; in language, LLMs vary unexpectedly under prompt rephrasings, formatting changes, or answer-order variations \citep{zheng2023large, sclar2023quantifying, mizrahi2024state, zhuo2024prosa}. While related, we make a broader argument that there is mismatch between how AI models behave in testing environments and the behavioral assumptions inherited from human-centered evaluation theory -- many of which involves robustness issues.


\section{Propositions and Proofs}\label{appsec:props_proofs}

\begin{proposition}
\label{prop:asn_irt}
Under the IRT model \eqref{eq:mirt}–\eqref{eq:irt}, the residuals
$\epsilon_i \;=\;\phi_i - f_i(\theta)$
satisfy
\[
\E[\epsilon_i] \;=\;0,
\qquad
\Cov\bigl(\theta,\epsilon_i\bigr) \;=\;0.
\]
Hence IRT implicitly meets Assumption~\ref{asn:ctt}.
\end{proposition}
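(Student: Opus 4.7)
The plan is to exploit the defining property of the IRT model, namely that conditional on the latent ability $\theta$, the score $\phi_i$ is a Bernoulli variable with mean $f_i(\theta)$. Both conclusions will then fall out of the law of iterated expectations (tower property), with no further structural assumptions needed beyond equations~\eqref{eq:rasch}--\eqref{eq:irt}.

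First I would establish zero conditional mean. Since $\phi_i \mid \theta \sim \mathrm{Bernoulli}(f_i(\theta))$, the conditional expectation satisfies
\begin{equation*}
\E[\epsilon_i \mid \theta] \;=\; \E[\phi_i \mid \theta] - f_i(\theta) \;=\; f_i(\theta) - f_i(\theta) \;=\; 0.
\end{equation*}
Taking an outer expectation over $\theta$ and applying the tower property then gives the first claim $\E[\epsilon_i] = \E[\E[\epsilon_i \mid \theta]] = 0$.

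Next I would handle the covariance. Because $\E[\epsilon_i] = 0$ from the previous step, we have $\Cov(\theta,\epsilon_i) = \E[\theta\,\epsilon_i]$. Conditioning on $\theta$ and pulling it out of the inner expectation yields
\begin{equation*}
\E[\theta\,\epsilon_i] \;=\; \E\bigl[\,\theta\,\E[\epsilon_i \mid \theta]\,\bigr] \;=\; \E[\theta \cdot 0] \;=\; 0,
\end{equation*}
which completes the proof. The two conclusions together are precisely Assumption~\ref{asn:ctt}.

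There is essentially no obstacle here; the result is a direct consequence of the Bernoulli specification in~\eqref{eq:irt}. The only subtlety worth flagging explicitly in the write-up is that integrability of $\theta$ (so that $\E[\theta\,\epsilon_i]$ is well-defined) should be noted, but this is implicit in treating $\theta$ as a latent trait with finite mean, as is standard in IRT. No appeal to the specific Rasch form $f_i(\theta)=\sigma(\theta-b_i)$ is actually required --- the argument goes through for any response function $f_i$ used in place of~\eqref{eq:rasch}.
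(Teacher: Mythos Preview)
Your proposal is correct and follows essentially the same approach as the paper: both arguments condition on $\theta$, use the Bernoulli specification to get $\E[\epsilon_i\mid\theta]=0$, and then apply the tower property for both the mean and the covariance. The only cosmetic difference is that the paper writes the covariance as $\E[(\theta-\E[\theta])\epsilon_i]$ before conditioning, whereas you first reduce it to $\E[\theta\,\epsilon_i]$ using $\E[\epsilon_i]=0$; your added remarks on integrability and the irrelevance of the specific Rasch form are sound refinements.
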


\begin{proof}
The key to this proof is that and assuming \emph{local independence} (i.e.\ the responses $\{\phi_i\}$ are conditionally independent given $\theta$), 

By definition of the model,
\[
\phi_i = f_i(\theta) + \epsilon_i,
\]
so equivalently
\[
\epsilon_i = \phi_i - f_i(\theta).
\]
1. \textbf{Zero mean.}  Since $\phi_i\mid\theta\sim\mathrm{Bernoulli}\bigl(f_i(\theta)\bigr)$,  
\[
\E[\epsilon_i \mid \theta]
= \E[\phi_i\mid\theta] \;-\; f_i(\theta)
= f_i(\theta) - f_i(\theta)
= 0.
\]
Taking the outer expectation gives
\[
\E[\epsilon_i]
= \E\bigl[\E[\epsilon_i\mid\theta]\bigr]
=0.
\]

2. \textbf{Zero covariance.}  We compute
\[
\Cov(\theta,\epsilon_i)
= \E\bigl[(\theta - \E[\theta])\,\epsilon_i\bigr]
= \E\Bigl[\E\bigl[(\theta - \E[\theta])\,\epsilon_i\mid \theta\bigr]\Bigr].
\]
Inside the inner expectation, $\theta$ is fixed, so
\[
\E\bigl[(\theta - \E[\theta])\,\epsilon_i\mid \theta\bigr]
= (\theta - \E[\theta])\;\E[\epsilon_i\mid\theta]
= (\theta - \E[\theta])\cdot 0
=0.
\]
Hence $\Cov(\theta,\epsilon_i)=0$.

\end{proof}

\begin{proposition}
    If $x_{ij}$ is independently sampled from $\mathcal{P}_i$, then define an estimator $\hat{\theta}_i = \frac{1}{m_i} \sum_{j=1}^{m_i} \phi_{ij}$. By Assumption~\ref{asn:mean_zero_perturbations}, we have
$\hat{\theta}_i \asto \theta_i \;\; \text{as} \; m_i \rightarrow \infty$
\label{prop:perturb_converge}
\end{proposition}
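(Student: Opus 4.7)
The plan is to substitute the model~\eqref{eq:ai_ability} into the estimator and reduce the claim to two applications of the Strong Law of Large Numbers. Writing
\[
\hat\theta_i \;=\; \frac{1}{m_i}\sum_{j=1}^{m_i}\phi_{ij} \;=\; \theta_i \;+\; \frac{1}{m_i}\sum_{j=1}^{m_i} s(x_{ij}) \;+\; \frac{1}{m_i}\sum_{j=1}^{m_i}\epsilon_{ij},
\]
it suffices to show that each of the two averages on the right vanishes almost surely as $m_i\to\infty$.

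For the perturbation term, by hypothesis $\{x_{ij}\}_{j\ge 1}$ are i.i.d.\ draws from $\mathcal{P}_i$, so $\{s(x_{ij})\}$ is an i.i.d.\ sequence. Since scores $\phi_{ij}$ (and hence $s$) are bounded (e.g., correctness in $[0,1]$), $s(x_{ij})$ is integrable; Kolmogorov's SLLN combined with Assumption~\ref{asn:mean_zero_perturbations} then yields
\[
\frac{1}{m_i}\sum_{j=1}^{m_i} s(x_{ij}) \;\asto\; \E_{x\sim\mathcal{P}_i}[s(x)] \;=\; 0.
\]
For the noise term, Assumption~\ref{asn:ctt} gives $\E[\epsilon_{ij}]=0$, and following the standard CTT convention (flagged in the footnote after Assumption~\ref{asn:ctt}) the errors across perturbations are independent with finite variance. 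A second application of the SLLN gives $m_i^{-1}\sum_j \epsilon_{ij}\asto 0$. Almost sure convergence is preserved under finite sums, so $\hat\theta_i\asto\theta_i$.

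The main subtlety is not the analytic machinery (SLLN is routine once the sequence is i.i.d.\ and integrable) but rather making explicit which independence structure across perturbations is actually being assumed for $\epsilon_{ij}$. The proposition only states that the \emph{phrasings} $x_{ij}$ are i.i.d.; to conclude, one must either extend Assumption~\ref{asn:ctt} so that the within-item noise $\epsilon_{ij}$ is also i.i.d.\ (or at least pairwise independent with bounded variance, which suffices for Etemadi's SLLN), or absorb $\epsilon_{ij}$ into $s(x_{ij})$ by treating the stochastic response as a deterministic function of the prompt plus external randomness. Either route is standard, but the choice should be stated plainly so the reader sees where boundedness and independence of the response variable enter.
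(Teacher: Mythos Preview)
The proposal is correct and follows essentially the same SLLN-based route as the paper; the only cosmetic difference is that the paper applies the SLLN once to $\phi_{ij}$ directly (computing $\E[\phi_{ij}]=\theta_i$ and invoking pseudo-independence of the draws), whereas you decompose into the $s$ and $\epsilon$ pieces and invoke the SLLN on each separately. Your added caveat about the independence structure of $\epsilon_{ij}$ is a fair observation that the paper simply absorbs into the phrase ``(pseudo-)independent draws with finite mean.''
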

\begin{proof}
Taking expectations,
\[
\E[\phi_{ij}]
= \theta_i + \E[s(x_{ij})] + \E[\epsilon_{ij}]
\;\stackrel{\text{Asn.\,\ref{asn:mean_zero_perturbations}}}{=}\;
\theta_i + 0 + 0
= \theta_i.
\]
Since the \(\phi_{ij}\) are (pseudo-)independent draws with finite mean, the Strong Law of Large Numbers gives
\[
\hat\theta_i
= \frac1{m_i}\sum_{j=1}^{m_i}\phi_{ij}
\;\asto\;
\E[\phi_{ij}]
= \theta_i,
\]
as \(m_i\to\infty\).
\end{proof}

\begin{proposition}
\label{prop:perturb_stronger}
Let a benchmark contain phrasings drawn from an unknown distribution $\mathcal{P}_i^{(0)}$. Let $\delta_i^{(0)} := \E_{\mathcal{P}_i^{(0)}}[s(x)]$ denote the induced bias in the recovered latent trait.  
Suppose a perturbation mechanism generates $m_i$ variants $\{x_{ij}\}_{j=1}^{m_i}$ with distribution $\tilde{\mathcal{P}}_i$ and bias $\delta_i := \E_{\tilde{\mathcal{P}}_i}[s(x)]$.

Define the plug-in estimator $\hat{\theta}_i := \frac{1}{m_i} \sum_{j=1}^{m_i} \phi_{ij}$. Then:

\begin{enumerate}[label=(\roman*)]
    \item As $m_i \to \infty, \quad \hat{\theta}_i \xrightarrow{\text{a.s.}} \theta_i + \delta_i$.
    \item $|\delta_i| < |\delta_i^{(0)}|
        \quad\text{and}\quad
        \E\!\left[(\hat{\theta}_i - \theta_i)^2\right]
            < 
        \E\!\left[(\phi_i^{(0)} - \theta_i)^2\right]$ if $\text{dist}(\tilde{\mathcal P}_i, \mathcal P_i) < \text{dist}(\mathcal P^{(0)}_i, \mathcal P_i)$.
\end{enumerate}
\end{proposition}

\begin{proof}
(i) Write
\[
    \hat{\theta}_i - \theta_i
    = 
    \frac{1}{m_i} \sum_{j=1}^{m_i} s(x_{ij})
    +
    \frac{1}{m_i} \sum_{j=1}^{m_i} \epsilon_{ij}.
\]
By the Strong Law of Large Numbers,
\[
    \frac{1}{m_i} \sum_{j=1}^{m_i} s(x_{ij})
    \xrightarrow{\text{a.s.}} 
    \E_{\tilde{\mathcal{P}}_i}[s(x)] 
    = \delta_i,
\]
and the noise term converges almost surely to zero.  
Hence $\hat{\theta}_i \to \theta_i + \delta_i$.

(ii) If $\tilde{\mathcal{P}}_i$ is closer to $\mathcal{P}_i$ than $\mathcal{P}_i^{(0)}$, then by stability of expectations under Wasserstein or TV convergence,
\[
    |\E_{\tilde{\mathcal{P}}_i}[s(x)] - \E_{\mathcal{P}_i}[s(x)]|
    <
    |\E_{\mathcal{P}_i^{(0)}}[s(x)] - \E_{\mathcal{P}}[s(x)]|.
\]
Since Assumption~\ref{asn:mean_zero_perturbations} implies $\E_{\mathcal{P}_i}[s(x)] = 0$, we obtain $|\delta_i| < |\delta_i^{(0)}|$.  
Bias reduction follows immediately; variance shrinks because averaging over $m_i$ perturbations reduces the contribution of both $s(x)$ and the noise term.  
\end{proof}

\section{Expanding the Phrasing Space Changes Inferences}\label{sec:bias_experiment}

We view perturbations not as a way to recover ground truth (which is unknowable), but as a \emph{robustness probe}: enlarging the phrasing set offers one principled way to test whether benchmark conclusions are stable to natural linguistic variation. Prior work on perturbations \citep{mizrahi2024state, sclar2023quantifying, zheng2023large} already illustrates that model predictions can vary substantially across alternative phrasings, but here we re-frame the problem as \textit{inconsistencies in inference}.

For each question $i$, let $\phi_i^{\text{orig}}$ denote performance on the original phrasing and $\bar{\phi}_i = \frac{1}{m_i}\sum_{j=1}^{m_i} \phi_{i,j}$ be mean performance across $m_i$ natural perturbations. We examine the discrepancy $D_i = \phi_i^{\text{orig}} - \bar{\phi}_i,$
which quantifies how much the original phrasing deviates from the performance obtained over a broader phrasing set.

\subsection{Experiments on smaller, open-source models}\label{appsec:main_bias_exp}
\subsubsection{Setup}
\textbf{Datasets.}
We test on two benchmarks, Big-Bench Hard (BBH) \citep{suzgun2023challenging} and LMEntry \citep{efrat2023lmentry}, both of which have perturbed versions from \citep{mizrahi2024state}. Each dataset contains sub-tasks testing different concepts, and we use four from each category. For LMEntry, we use \texttt{any word from category} (\textbf{AWFC}), \texttt{first alphabetically} (\textbf{FA}), \texttt{more letters} (\textbf{ML}), and \texttt{rhyming word} (\textbf{RW}). For BBH, we use \texttt{causal judgment} (\textbf{CJ}), \texttt{movie recommendation} (\textbf{MR}), \texttt{formal fallacies} (\textbf{FF}), and \texttt{snarks} (\textbf{S}). See Tables~\ref{tab:data_desc} and~\ref{tab:data_samples} for details of each task. 

\begin{table}
    \centering
    \caption{Description of each benchmark task, along with the number of datapoints.}
    \label{tab:data_desc}
    \begin{tabular}{p{3cm}p{6cm}p{3cm}}
        \toprule
        Task & Description & Number of datapoints \\
        \midrule
        \rowcolor[HTML]{EFEFEF}\multicolumn{3}{l}{\textbf{LMEntry}} \\
        Any word from category & Yes or no question -- decide if any of the five words belong to a given category & 3,000 (randomly sampled 500) \\
        First alphabetically & One of two words -- decide which comes first alphabetically & 3,000 (randomly sampled 500)\\
        More letters & One of two words -- decide which one has more letters & 3,000 (randomly sampled 500)\\
        Rhyming word & One of two words -- decide which one rhymes with a given word & 3,000 (randomly sampled 500)\\
        \midrule
        \rowcolor[HTML]{EFEFEF}\multicolumn{3}{l}{\textbf{BBH}} \\
        Causal Judgment & Yes or no question -- decide if the statement is causal in nature & 146 \\
        Movie Recommendation & Given four movies, decide which of the four options is most similar. & 250\\
        Formal Fallacies & Determine whether or not the argument is deductively valid. & 250 \\
        Snarks & Determine which of the two statements is sarcastic. & 178\\
        \midrule
        GPQA & Decide which of the four options is factually correct. & 448 \\
        \bottomrule
    \end{tabular}
\end{table}

\begin{longtable}{p{1.5cm}p{11.5cm}}
    \caption{Sample question and perturbation for each of the benchmark datasets tested.} \label{tab:data_samples} \\
    \toprule
    Task & Sample \\
    \midrule
    \endfirsthead

    \multicolumn{2}{c}%
    {{\bfseries Table \thetable{} -- continued from previous page}} \\
    \toprule
    Task & Sample \\
    \midrule
    \endhead

    \midrule \multicolumn{2}{|r|}{{Continued on next page}} \\ \midrule
    \endfoot

    \bottomrule
    \endlastfoot

    \rowcolor[HTML]{EFEFEF}\multicolumn{2}{l}{\textbf{LMEntry}} \\
    All word from category & \textbf{Sample Question:} Q: Are all of the words "peach", "couch", "coat", "truck", and "shirt" types of animals? Answer either "yes" or "no". A:

    \textbf{Sample Perturbation:} Is "animal" represented by all of the words "peach", "couch", "coat", "truck", and "shirt"? Please respond with a "yes" or "no". \\
    \midrule
    First alphabetically & \textbf{Sample Question:} Q: In an alphabetical order, which of the words "beach" and "silver" comes first? A:
    
    \textbf{Sample Perturbation:} Which word precedes the other in alphabetical order, "silver" or "beach"?\\
    \midrule
    More letters & \textbf{Sample Question:} Q: Which word has more letters, "fit" or "rice"? A:
    
    \textbf{Sample Perturbation:} Which of the two words, "rice" or "fit", is longer? \\
    \midrule
    Rhyming word & \textbf{Sample Question:} Q: Which word rhymes with the word "declare", "beer" or "wear"? A: 
    
    \textbf{Sample Perturbation:} Which word, "bear" or "wear", rhymes with the word "declare"? \\
    \midrule
    \rowcolor[HTML]{EFEFEF}\multicolumn{2}{l}{\textbf{BBH}} \\
    Causal Judgment & \textbf{Sample Question:} How would a typical person answer each of the following questions about causation?
    
    Brown is playing a simple game of dice. The game requires that Brown roll a six to win. So, hoping to get a six, Brown throws a die onto the table. Unluckily for the other players, the die lands six-up and Brown wins the game. Did Brown intentionally roll a six?
    
    Options:

    - Yes
    
    - No 
    
    \textbf{Sample Perturbation:} Given a question about causation, classify whether a typical person would answer with "Yes" or "No".
    
    Question: Brown is playing a simple game of dice. The game requires that Brown roll a six to win. So, hoping to get a six, Brown throws a die onto the table. Unluckily for the other players, the die lands six-up and Brown wins the game. Did Brown intentionally roll a six?
    
    Answer:\\
    \midrule
    Movie Recommendation & \textbf{Sample Question:} Find a movie similar to Batman, The Mask, The Fugitive, and Pretty Woman:
    
    Options:
    
    (A) The Front Page
    
    (B) Maelstrom
    
    (C) The Lion King
    
    (D) Lamerica
    
    \textbf{Sample Perturbation:} Please suggest a movie that is similar to Batman, The Mask, The Fugitive, and Pretty Woman. You can choose from the following options:

    (A) The Lion King
    
    (B) Lamerica
    
    (C) The Front Page
    
    (D) Maelstrom\\
    \midrule
    Formal Fallacies & \textbf{Sample Question:} Here comes a perfectly valid argument: First, being a cousin of Chris is sufficient for not being a son of Kermit. We may conclude that whoever is not a son of Kermit is a cousin of Chris.
    
    Is the argument, given the explicitly stated premises, deductively valid or invalid?
    
    Options:
    
    - valid
    
    - invalid

    \textbf{Sample Perturbation:} Q: Is the argument, given the explicitly stated premises, deductively valid or invalid?
    
    First, being a cousin of Chris is sufficient for not being a son of Kermit. We may conclude that whoever is not a son of Kermit is a cousin of Chris. \\
    \midrule
    Snarks & \textbf{Sample Question:} Which statement is sarcastic?
    
    Options:
    
    (A) Hey, just be happy then you won't be depressed anymore
    
    (B) Hey, just be happy that you won't be depressed anymore

    \textbf{Sample Perturbation:} Which of the following sentences is sarcastic?
    
    Options:

    (A) Hey, just be happy then you won't be depressed anymore
    
    (B) Hey, just be happy that you won't be depressed anymore
    
    Answer:\\
    \midrule
    \rowcolor[HTML]{EFEFEF}\multicolumn{2}{l}{\textbf{GPQA}} \\
    GPQA & \textbf{Sample Question:} In a parallel universe where a magnet can have an isolated North or South pole, Maxwell's equations look different. But, specifically, which of those equations are different?

    Options:

    (A) The ones related to the circulation of the electric field and the divergence of the magnetic field.

    (B) The ones related to the divergence and the curl of the magnetic field.

    (C) The one related to the divergence of the magnetic field.

    (D) The one related to the circulation of the magnetic field and the flux of the electric field.
    
    \textbf{Sample Perturbation:}  In an alternate universe where magnets can possess a lone North or South pole, how do Maxwell's equations change? Which particular equations differ in this scenario?
    
    Options:

    (A) The ones related to the circulation of the electric field and the divergence of the magnetic field.

    (B) The one related to the circulation of the magnetic field and the flux of the electric field.

    (C) The ones related to the divergence and the curl of the magnetic field.

    (D) The one related to the divergence of the magnetic field.
     \\
\end{longtable}

We use all questions from all benchmarks except for LMEntry, where we randomly sampled 500 questions from each task due to computational constraints. The perturbed dataset of LMEntry and BBH \citep{mizrahi2024state} consist of different instruction formatting that was generated by both a very capable LLM and manual human labor. All prompts have been checked manually by human annotators. Each task (from both LMEntry and BBH) has around 150 distinct formatting perturbations, from which we randomly sampled 20 for each question, again due to computational constraints. Apart from the formatting perturbations, we also perturbed multiple choice order when relevant.

\textbf{Models.}
We test 7 open-source autoregressive language models from three model families: \texttt{Llama-3.2} (1B and 3B parameters), \texttt{Qwen-2.5} (1.5B, 3B, 7B parameters), and \texttt{gemma} (1B, 4B params). All models were instruction-tuned. For each model, we use a temperature of 0.9. For each question, we sample 20 times to account for output stochasticity. Then, we parse each question using soft regex rules tailored to the question type. Note that we do not use LLM as a judge because the questions all present multiple answer choices, and thus answers were relatively easy to extract from raw outputs. We obtain $\theta_i$ for each question $i$ by averaging correctness over the 20 queries. The same procedure also applies to the original benchmark data, with the exception that each question was not associated with 20 different perturbations. All experiments were done using one NVIDIA L40 GPU.

\subsubsection{Results}\label{appsec:addl_bias_results}
Figure~\ref{fig:hat_s_i_diagram} shows that $D_i$ frequently departs from zero, sometimes by substantial margins (up to $\pm 15$ p.p.). Importantly, the direction of deviation varies by model and task: for example, \textbf{RW} shows negative deviation for Llama-3B but positive deviation for Qwen-7B and Gemma-4B. This variability echoes prior findings \citep{mizrahi2024state, sclar2023quantifying} and suggests that conclusions drawn from a single phrasing are not always stable.

Figure~\ref{fig:d_i_dist} reports the empirical distribution of $M_i = \frac{1}{m_i}\sum_{j=1}^{m_i} |\phi_{i,j} - \bar{\phi}_i|$
a measure of per-item sensitivity. Across models and tasks, $M_i$ spans $10$--$50$ p.p., indicating substantial variation in performance across natural phrasings.

\begin{figure}
    \centering
    \includegraphics[width=\linewidth]{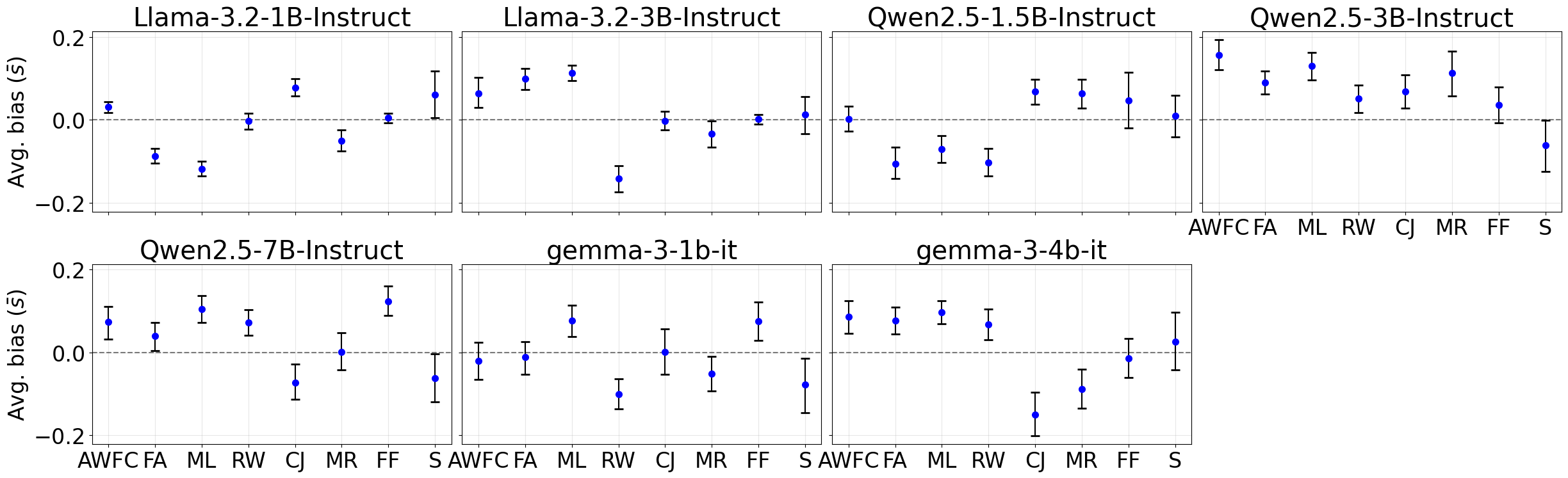}
    \caption{Systematic bias between estimates of accuracy based on the original benchmark data and estimate of true performance accounting for natural perturbations, across eight benchmark tasks and tested on seven LLMs.}
    \label{fig:hat_s_i_diagram}
\end{figure}

\begin{figure}
    \centering
    \includegraphics[width=\linewidth]{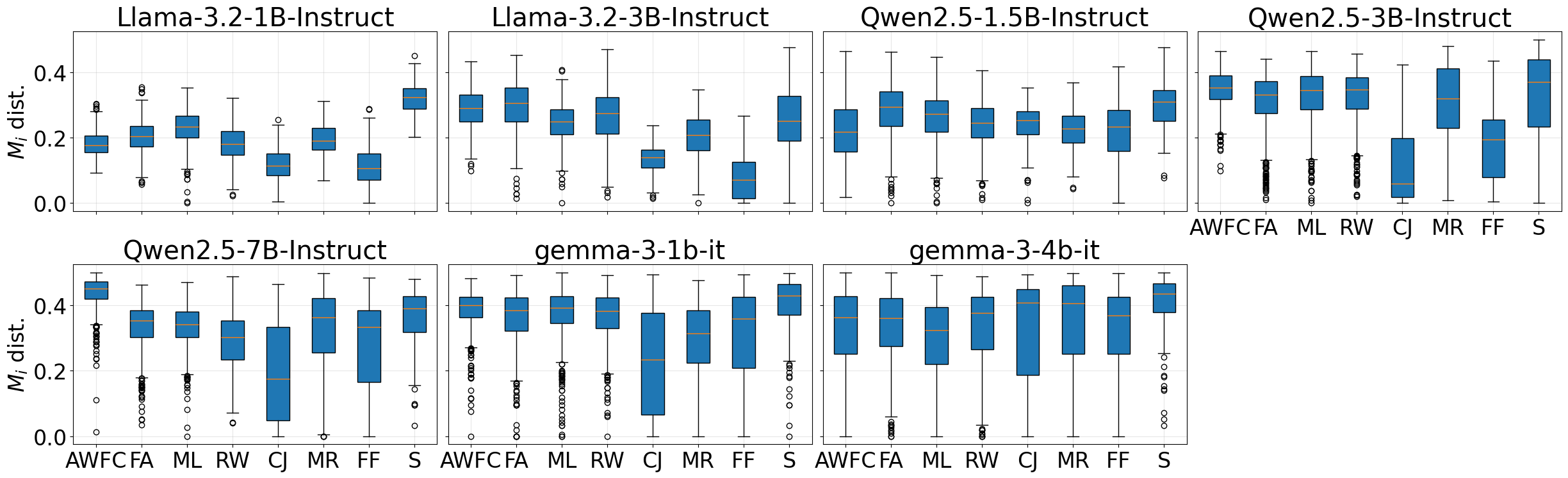}
    \caption{Mean absolute distance $M$, quantifying the expected deviation in performance for a new question/prompt from the benchmark distribution. Results are over all eight benchmark tasks, tested on seven LLMs.}
    \label{fig:d_i_dist}
\end{figure}

\begin{figure}
    \centering
    \includegraphics[width=\linewidth]{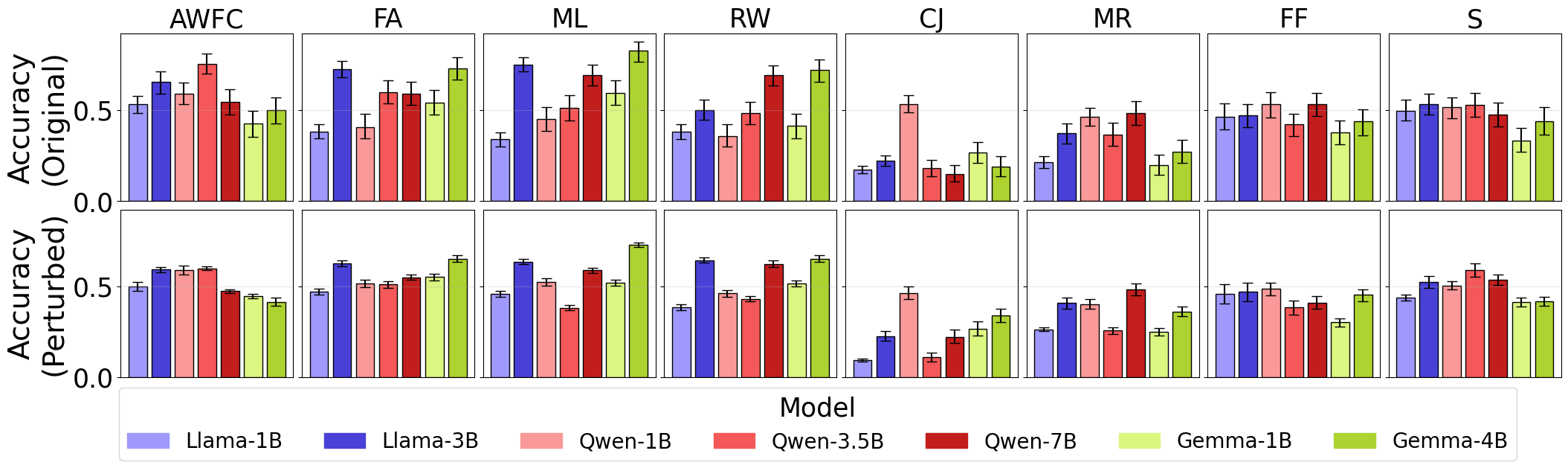}
    \caption{Estimated accuracies with bootstrap confidence intervals, over the original benchmark [top] and over the perturbed benchmark [bottom], for all eight benchmark tasks and over seven LLMs.}
    \label{fig:distort_leaderboard}
\end{figure}

Given this systematic bias, leaderboard rankings can be distorted as seen in Figure~\ref{fig:distort_leaderboard}. First, we note that the error bars for the perturbed dataset are significantly smaller than the original dataset -- this is because by averaging over 20 perturbations, we are reducing some of the internal variance in our estimate. This reduction in uncertainty allows us to better compare model performances even before considering re-rankings. For example, \texttt{Qwen-1B} and \texttt{Qwen-3.5B} are quite undifferentiated for the original \textbf{S}, we see sufficient separation between the two estimated accuracies for the perturbed \textbf{S}. Additionally, perturbed data could result in re-rankings. For example, for the original \textbf{CJ}, \texttt{Qwen-3.5B} outperforms \texttt{Qwen-7B} and \texttt{Gemma-1B} outperforms \texttt{Gemma-4B}, but the ordering is flipped for the perturbed \textbf{CJ} -- which is the ordering we would have expected given the difference in model sizes.

\subsection{A preliminary probe on SoTA models}\label{appsec:sota_exp}
\textbf{Datasets.} We use two datasets, \texttt{movie recommendation} (\textbf{MR}) from BBH as described above, and GPQA. GPQA contains multiple-choice, graduate-level questions from the natural sciences (e.g., mathematics, chemistry). The questions are designed to be hard, in that non-experts cannot easily Google the question and retrieve the correct answer. For each question, we generate five random perturbations using \texttt{gpt-4.1-mini} using the following prompt:
\fbox{%
  \begin{minipage}{0.95\linewidth}
    \texttt{Please generate 5 different perturbations of the prompt below, keeping all the pertinent information but expressed in a different way. When the prompt gives multiple choices, DO NOT SHUFFLE THE OPTIONS, but feel free to re-word each option. Be as clear as possible. Structure your response with }

\texttt{"Perturbation 1: [PROMPT]}

\texttt{Perturbation 2: [PROMPT]" and so on.}

\texttt{Prompt:}
  \end{minipage}
}

All of the questions were qualitatively checked to ensure that the perturbations remain clear and do not lose any pertinent information. Then, we randomly shuffle the answer choices and add them onto the question to construct the full input prompt. See Table~\ref{tab:data_samples} for an example.

\textbf{Models.}
We test \texttt{gpt-4.1} and \texttt{gpt-4.1-mini}, state-of-the-art autoregressive language models. We use \texttt{gpt-4.1-mini} to randomly generate question perturbations on GPQA. \texttt{gpt-4.1} achieves 71\% and 47\% accuracy on the original benchmarks \texttt{MR} and GPQA, while \texttt{gpt-4.1-mini} achieve 65\% and 47\%, respectively. This suggests that GPQA is a frontier dataset, since even the best current models (save for reasoning/test-time inference) are incorrect most of the time on that benchmark. For each model, we use a temperature of 0.7. For each question, we sample 5 times to account for output stochasticity. Then, we parse each question using soft regex rules tailored to the question type, and recover $\theta_i$ by averaging correctness for each query. All experiments were done through API calls to OpenAI.\footnote{See \url{https://openai.com/api/}.}

\textbf{Results.}
Figure~\ref{fig:gpt_analysis} shows analogous results for \texttt{gpt-4.1} and \texttt{gpt-4.1-mini} on \textbf{MR} and GPQA. Although deviations are generally smaller (roughly $0$--$8$ p.p. for \textbf{MR} and somewhat larger for GPQA), they are non-negligible. Since these experiments span only a handful of tasks, we treat them as \emph{initial probes}. Nonetheless, they indicate that even state-of-the-art models can exhibit notable phrasing sensitivity on harder, frontier problems.

\begin{figure}
    \centering
    \includegraphics[width=0.85\linewidth]{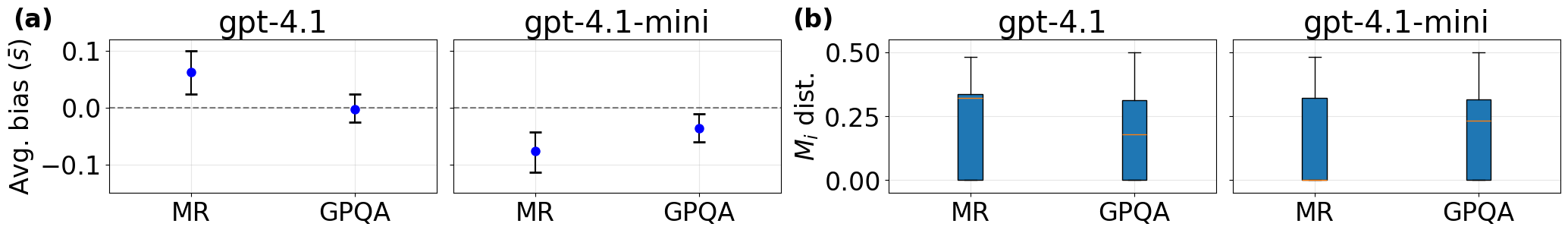}
\caption{Same as Figures~\ref{fig:hat_s_i_diagram} and~\ref{fig:d_i_dist} but with \texttt{gpt-4.1} and \texttt{gpt-4.1-mini} models and only on \textbf{MR}, \textbf{GPQA} data.}
    \label{fig:gpt_analysis}
\end{figure}

\section{Inference Methods}\label{appsec:detail_algorithms}

In this section, we provide detailed algorithms for the four methods described in Section~\ref{sec:infer_strategies}.

\subsection{Inference under CTT}

Under classical test theory (CTT), each observed score $\phi_{ij}$ is modeled as a noisy realization of an underlying item-level latent accuracy $\theta_i$, with repeated prompts serving as conditionally independent measurements of the same item. The quantity of interest is the population-level mean accuracy $\theta = \mathbb{E}[\theta_i]$, which in practice is estimated by averaging item-wise means $\hat{\theta}_i = m_i^{-1}\sum_{j=1}^{m_i}\phi_{ij}$. Because prompts are nested within items, naïvely bootstrapping individual observations would underestimate uncertainty by ignoring within-item dependence.

Algorithm~\ref{alg:bootstrap-ctt} implements a clustered bootstrap that respects this hierarchical structure. Each bootstrap replicate resamples items (clusters) with replacement, recomputes the corresponding item-level accuracies, and aggregates them to form a replicate estimate of the overall accuracy. The resulting empirical distribution of $\hat{\theta}^{(b)}$ approximates the sampling distribution of the estimator under the CTT model, allowing percentile-based confidence intervals that are robust to heteroskedasticity in the number of prompts per item and arbitrary within-item dependence.

\begin{algorithm}[t]
    \small
\caption{Clustered Bootstrap for Estimating Accuracy (CBA)}
\label{alg:bootstrap-ctt}
\begin{algorithmic}[1]
  \Require Observed scores $\{\phi_{ij}\}$ for $i=1,\dots,n$, prompts per item $\{m_i\}$, resamples $B$
  \State Initialize empty vector $\mathcal{T}\gets [\,]$
  \For{$b \gets 1$ to $B$}
    \State Sample with replacement $n$ indices $\mathcal{I}^{(b)}\subseteq\{1,\dots,n\}$ \Comment{resample items}
    \ForAll{$i\in\mathcal{I}^{(b)}$}
      \State $\hat{\theta}_i^{(b)} \gets \frac{1}{m_i}\sum_{j=1}^{m_i}\phi_{ij}$
    \EndFor
    \State $\hat{\theta}^{(b)} \gets \frac{1}{n}\sum_{i\in\mathcal{I}^{(b)}}\hat{\theta}_i^{(b)}$
    \State Append $\hat{\theta}^{(b)}$ to $\mathcal{T}$
  \EndFor
  \State \Return Empirical distribution $\mathcal{T}$ (use percentiles for $(1-\alpha)$ CI)
\end{algorithmic}
\end{algorithm}

\subsection{Inference under IRT}\label{appsec:infer_item_params}

Assume we have an ecosystem of $K$ generative models. We will use Laplace-approximated marginal maximum likelihood estimation, which places priors on parameters $\theta$, integrates them out by taking their mode (using Laplace approximation), then directly maximizes the marginal likelihood over the remaining parameters. Note that any valid inference method will work, and there are well-established methods in the literature; for example, \citet{thissen2020intellectual} outline an expectation-maximization (EM) algorithm and a Markov chain Monte Carlo algorithm based on Bayesian inference. Note that these parameters are \textbf{identifiable} as long as either $\theta_k$ or $b_i$ are anchored \citep{baker2004item}, hence why we anchor $\theta_k$ to be mean zero and variance one through our prior $\theta_k \sim \mathcal{N}(0, 1)$. 

\textbf{Preliminaries.} Recall that 
$$\phi_{ijk} \sim \mathrm{Bernoulli}(\bar P_{ik}(\theta)), \quad \bar P_{ik}(\theta) = \sigma(a_i(\theta_k-b_i)),$$

where we re-introduce the index $k$ because we have multiple models now). We therefore have 
$$Y_{ik} = \sum_{j=1}^m \phi_{ijk} \sim \mathrm{Bin}(m, \bar P_{ik}(\theta)).$$






\textbf{Log-Likelihood.} We now derive the log-likelihood for which we optimize. Let $\xi = \{a_1, \dots, a_n, b_1, \dots, b_n, \theta_1, \dots, \theta_K)$ capture the set of parameters we want to infer. By Bayes rule,
$p(\xi|Y) \; \propto \; p(Y|\xi) p(\xi)$ and thus 

$$\log p(\xi|Y) \; \propto \; \log p(Y|\xi) + \log p(\xi).$$

Since $Y$ follows a binomial distribution,
$$\log p(Y|\xi) = \sum_{i=1}^n \sum_{k=1}^K \left[Y_{ik} \log \bar{P}_{ik} + (m-Y_{ik}) \log(1-\bar{P}_{ik})\right].$$

For the prior likelihood, we a prior on $\theta_k$ so that parameters are identifiable. In particular, $\theta_k \sim  \mathcal{N}(0, 1)$, which yields 
$$\log p(\theta) = -0.5 \sum_{k} \theta_k^2.$$

In sum, we have the following posterior log likelihood:
\begin{equation}
    \ell_{\text{post}} =  \sum_{i, k} \left[Y_{ik} \log \bar{P}_{ik} + (m-Y_{ik}) \log(1-\bar{P}_{ik})\right] - \frac{1}{2} \sum_k \theta_k^2.
\end{equation}

We then minimize $-\ell_{\text{post}}$ using L-BFGS-B\footnote{See \url{https://docs.scipy.org/doc/scipy/reference/optimize.minimize-lbfgsb.html}.} over $[\log a_i, b_i, \theta_k]$, and taking the exponentials over $\log a_i$ to enforce positivity.

We summarize the procedure below:
\begin{enumerate}
    \item Construct $\ell_{\text{post}}$ on parameter vector $\tilde{\xi} = [\log a_i, b_i, \theta_k]$.
    \item Optimize $-\ell_{\text{post}}$ using L-BFGS-B
    \item Recover $\xi$ by taking $\hat{a}_i = \exp(\hat{\log a_i}))$.
\end{enumerate}

\begin{algorithm}
    \small
\caption{Latent Ability Adaptive Test (LAAT)}
\label{alg:adaptive-irt}
\begin{algorithmic}[1]
  \Require Item params $\{(a_i,b_i)\}_{i=1}^N$, prior $\theta\sim\mathcal N(\mu_0,\sigma_0^2)$, perturbations $m$, stopping criterion $C$
  \State $\theta \gets \mu_0$, $\mathcal{I}\gets \tfrac1{\sigma_0^2}$, $\mathcal{A}\gets \emptyset$
  \While{$C$ not met}
    \State Select  $i^* = \arg\max_{i\notin\mathcal{A}} I_i(\theta)$, where
    \[
      I_i(\theta)=m a_i^2\,\bar P_i(\theta)\bigl(1-\bar P_i(\theta)\bigr),\quad
      \bar P_i(\theta) = \sigma(a_i(\theta-b_i))
    \]
    \State $\mathcal{A}\gets \mathcal{A}\cup\{i^*\}$
    \State Query model on $m$ perturbations of $i^*$, get $\{\phi_{i^*j}\}_{j=1}^m$
    \State $\phi_{i^*}\gets \frac1m\sum_{j=1}^m \phi_{i^*j}$
    \State $S \gets \sum_{i\in\mathcal{A}} m a_i\bigl(\phi_i - \bar P_i(\theta)\bigr)$
    \State $\mathcal{I}\gets \sum_{i\in\mathcal{A}} m a_i^2\,\bar P_i(\theta)\bigl(1-\bar P_i(\theta)\bigr)$
    \State Update $\theta \gets \theta + \frac{S}{\mathcal{I}}$ \Comment{Newton–Raphson/Fisher scoring}
  \EndWhile
  \State \Return $\theta$ (ability estimate), $1/\sqrt{\mathcal{I}}$ (SE), items asked $\mathcal{A}$
\end{algorithmic}
\end{algorithm}

\subsection{Inference under CDM}
Under a cognitive diagnostic model (CDM), observed responses are generated by an interaction between model-level latent skills $\theta_k$ and item-level attributes encoded by a $Q$-matrix. Each item $i$ is associated with parameters $\psi_i$ governing the link function $f_{\text{CDM}}(\theta, Q_i; \psi_i)$, which maps latent skills and required attributes to a distribution over observed scores. Repeated perturbations for the same item are treated as noisy measurements and are first aggregated to obtain item-level responses $\bar{\phi}_{ki}$ for each model.

Algorithm~\ref{alg:cdm} performs joint estimation of latent skills and item parameters by alternating optimization. Conditional on current item parameters, model-specific skills are updated by maximizing their posterior likelihood given all item responses, using second-order or quasi-Newton updates. Conditional on the updated skills, item parameters are refined by maximizing the expected complete-data log-likelihood under the current skill posteriors, with optional regularization through priors on $\psi$. This alternating procedure is iterated until convergence, yielding point estimates of both skills and item characteristics.

Uncertainty in the estimated skills can be approximated using local curvature (e.g., the inverse Hessian of the log-posterior) or via a clustered bootstrap over items. The resulting estimates and intervals capture both measurement noise from repeated perturbations and structural uncertainty induced by the item–skill interaction specified by the CDM.

\begin{algorithm}
    \small
    \caption{CDM: Joint Item and Skill estimation}
    \label{alg:cdm}
\begin{algorithmic}[1]
\Require $Q$-matrix, CDM link $f_{\text{CDM}}(\theta, Q_i; \psi_i)$ with item params $\psi_i$, 
         perturbations $\{x_{ij}\}$, responses $\{\phi_{kij}\}$ for models $k$
\State Aggregate perturbations: $\bar{\phi}_{ki} \gets \tfrac{1}{m_i}\sum_j \phi_{kij}$
\State Initialize item parameters $\psi = \{\psi_i\}$ and skills $\{\theta_k\}$
\Repeat
    \State \textit{// Skill (ability) updates}
    \For{each model $k$}
        \State Form log-posterior 
        $\log p(\theta_k) + \sum_i \log p(\bar{\phi}_{ki}\mid \theta_k, Q_i, \psi_i)$
        \State Update $\theta_k \gets \theta_k + \mathcal{I}(\theta_k)^{-1} S(\theta_k)$ 
               \Comment{Newton / L-BFGS on log-posterior}
    \EndFor
    \State \textit{// Item-parameter updates}
    \State Update $\psi$ to maximize 
           $\sum_{k,i} \E_{p(\theta_k)}[\log p(\bar{\phi}_{ki}\mid \theta_k, Q_i, \psi_i)] + \log p(\psi)$
           (e.g.\ gradient or L-BFGS step)
\Until{convergence of $\{\theta_k\}$ and $\psi$}
\State Approximate skill-level uncertainty via inverse Hessian or item bootstrap
\State \Return $\{\hat{\theta}_k\}$ and associated skill posteriors / intervals
\end{algorithmic}
\end{algorithm}

\subsection{Inference under BNSM}
Under the Bayesian network skill model (BNSM), latent skills are represented as nodes in a probabilistic graphical model with a structured dependency graph. A multivariate Gaussian prior $p(S;\mu,\Sigma)$ captures both marginal skill uncertainty and dependencies between skills, while each item $i$ is associated with a conditional probability distribution (CPD) $p(\phi_i \mid S; w_i)$ that links skill mastery to observed responses through a logistic likelihood. As in previous settings, repeated perturbations of the same item are treated as noisy measurements and are aggregated to form item-level evidence $\bar{\phi}_{ki}$ for each model.

Algorithm~\ref{alg:bnsm} alternates between approximate posterior inference over latent skills and parameter estimation. Given current model parameters, posterior distributions over skills are computed for each model using approximate Bayesian network inference methods, such as variational inference, Laplace approximations, or belief propagation for mixed continuous--discrete graphs. Conditioned on these approximate posteriors, the prior parameters and item-level CPDs are updated by maximizing the expected complete-data log-likelihood. This procedure iterates until joint convergence of parameters and posteriors.

The resulting posterior distributions $q_k(S)$ provide skill-level mastery estimates along with coherent uncertainty quantification through marginal credible intervals. Additional uncertainty arising from finite item sampling can be assessed via an optional clustered bootstrap over items, yielding inference that reflects both epistemic uncertainty in skill estimation and measurement noise in observed responses.

\begin{algorithm}
    \small
\caption{BNSM: Posterior Inference}
    \label{alg:bnsm}
\begin{algorithmic}[1]
\Require BN structure over skills $S$, Gaussian prior $p(S;\mu,\Sigma)$, 
         logistic item CPDs $p(\phi_i\mid S; w_i)$, perturbations $\{x_{ij}\}$, 
         responses $\{\phi_{kij}\}$ for models $k$
\State Aggregate evidence: $\bar{\phi}_{ki} \gets \tfrac{1}{m_i}\sum_j \phi_{kij}$
\State Initialize Gaussian prior params $(\mu,\Sigma)$ and item CPD weights $w=\{w_i\}$
\Repeat
    \State \textit{// Inference over skills given current parameters}
    \For{each model $k$}
        \State Run approximate BN inference (e.g.\ continuous--discrete belief propagation /
               variational / Laplace) to obtain 
               $q_k(S) \approx p(S \mid \{\bar{\phi}_{ki}\}_i, \mu,\Sigma,w)$
    \EndFor
    \State \textit{// Parameter updates given current posteriors}
    \State Update $(\mu,\Sigma,w)$ to maximize the expected complete log-likelihood
           $\sum_k \E_{q_k(S)}[\log p(S;\mu,\Sigma) + \sum_i \log p(\bar{\phi}_{ki}\mid S; w_i)]$
\Until{convergence of parameters and posteriors}
\State For each model $k$, extract posterior mastery summaries 
       (e.g.\ marginal means $\E_{q_k}[S_\ell]$ and credible intervals)
\State Optionally bootstrap items to quantify additional uncertainty
\State \Return Posterior skill mastery $\{q_k(S)\}$ and summaries per skill
\end{algorithmic}
\end{algorithm}

\subsection{Optimal Sampling under Budget Constraint}\label{appsec:optimal_sampling}
Given budget $B$, we can optimize the number of samples $m_i$ for each question $i$ using Neyman allocation \citep{neyman1992two}. Assume that $B$ is the number of times we can query the LLM. We treat each question $i$ as an equally weighted stratum. Given the standard deviation in performance of each stratum $\sigma_i$ (i.e., how much performance deviates across perturbations of the same question), the optimal sample size is given by:
$$m_i = \frac{\sigma_i B}{\sum_{j = 1}^n \sigma_j}.$$

However, $\sigma_i$ is unknown a priori, so we need to first estimate it. We propose a two-step procedure in Algorithm~\ref{alg:neyman}.
    

\begin{algorithm}[H]
\caption{Two-Step Neyman Allocation Procedure}
\label{alg:neyman}
\begin{algorithmic}[1]
\Require Budget $B$, number of questions $n$, initial sample size $m_0$
\Ensure Allocation of total samples $m_i = m_i^{(0)} + m_i^{(1)}$ for each question $i$
\State \textbf{[Step 1] Initial Sampling:}
\For{each question $i = 1$ to $n$}
    \State Query the LLM $m_0$ times for question $i$ to obtain responses $\{\phi_{ij}\}_{j=1}^{m_0}$
    \State Compute empirical mean: $\hat{\theta}_i = \frac{1}{m_0} \sum_{j=1}^{m_0} \phi_{ij}$
    \State Estimate standard deviation: $\hat{\sigma}_i = \sqrt{ \frac{1}{m_0 - 1} \sum_{j=1}^{m_0} (\phi_{ij} - \hat{\theta}_i)^2 }$
\EndFor
\State Compute remaining budget: $B' = B - n \cdot m_0$
\State \textbf{[Step 2] Neyman Allocation:}
\For{each question $i = 1$ to $n$}
    \State Allocate additional samples: $m_i^{(1)} = \left\lfloor \frac{\hat{\sigma}_i B'}{\sum_{j=1}^n \hat{\sigma}_j} \right\rfloor$
    \State Total samples for question $i$: $m_i = m_0 + m_i^{(1)}$
\EndFor
\State\Return $\{m_i\}_{i=1}^n$, $\{\phi_{ij}\}_{j=1}^{m_0}$
\end{algorithmic}
\end{algorithm}


\subsubsection{Trade-off between $n$, $m_i$ and repeated sampling}

So far, we've discussed given a fixed $n$, how to choose $m_i$ for each question $i$. For some large datasets, however, $n$ is too large given one's budget. In this case, we recommend choosing some small $m \geq 3$ for each question to get a sufficient estimate of variance, and backing out how much $n$ one can afford with $B$ queries. This is because in order for confidence intervals to be accurate, we need enough $m$ to simulate ``independent'' draws of question phrasings. However, we need sufficient $n$ since it is the most influential in reducing variance.

Note that \textbf{we do not} advocate for repeated sampling of outcomes for every perturbation to reduce the variance of $\epsilon_i$. This is because we are sampling independently, and that variance term will wash out as $n$ becomes large.

\section{Inference Experiment Details}
\label{appsec:infer_results}
We provide details on our implementation of Algorithm~\ref{alg:adaptive-irt} in our experiments. In particular, we used the following stopping criterion: stop when the difference in standard errors between the most recent iteration and the previous one is less than 0.0001. This stopping condition allows us to stop inference until we have squeezed out as much information as possible from the given questions; any additional questions will only improve our estimates marginally.  

Figures~\ref{fig:ctt_accuracy_estimates} and~\ref{fig:irt_theta_estimates} show inferences of accuracy and ability based on Algorithms~\ref{alg:bootstrap-ctt} and~\ref{alg:adaptive-irt}, respectively, on all eight benchmark tasks and over seven open-source LLMs. The full suite of experiments for Algorithm~\ref{alg:cdm} is displayed in Figure~\ref{fig:inference_results_main} since they are aggregated into five latent skills, but Figure~\ref{fig:bnsm_results} show results for all skills for BNSM.

\begin{figure}
    \centering
    \includegraphics[width=0.95\linewidth]{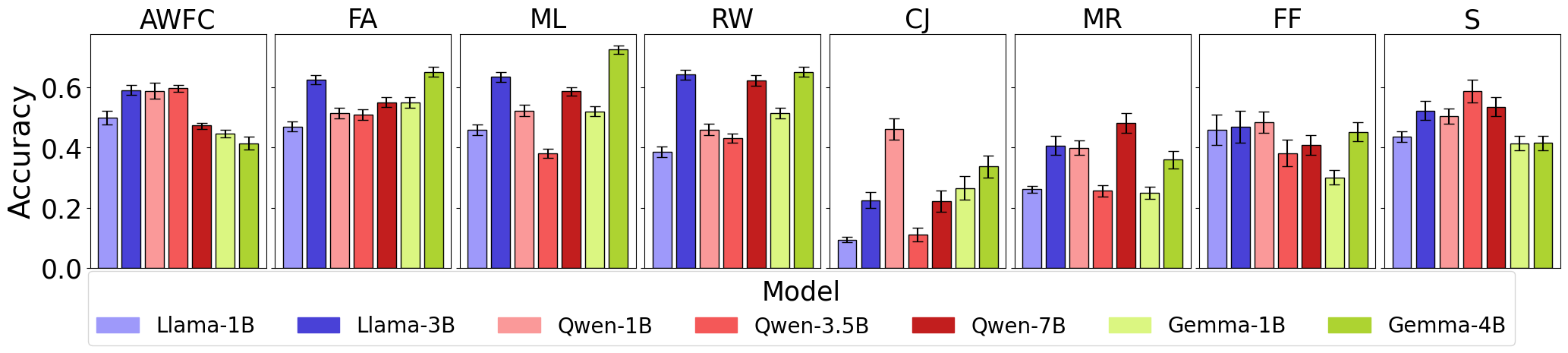}
    \caption{Estimates of accuracy using CBA (Algorithm~\ref{alg:bootstrap-ctt}) on eight benchmark tasks and over seven open-source LLMs.}\label{fig:ctt_accuracy_estimates}
\end{figure}

\begin{figure}
    \centering
    \includegraphics[width=\linewidth]{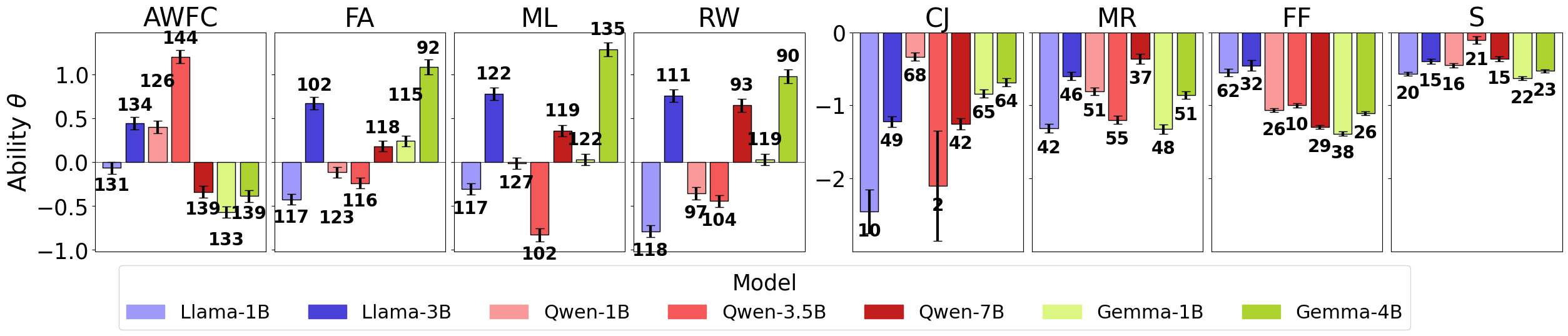}
    \caption{Estimates of ability using LAAT (Algorithm~\ref{alg:adaptive-irt}) on eight benchmark tasks and over seven open-source LLMs. Numbers in bold indicate number of questions asked in the adaptive test. Each question is associated with 20 random perturbations.}
    \label{fig:irt_theta_estimates}
\end{figure}

\begin{figure}
    \centering
    \includegraphics[width=\linewidth]{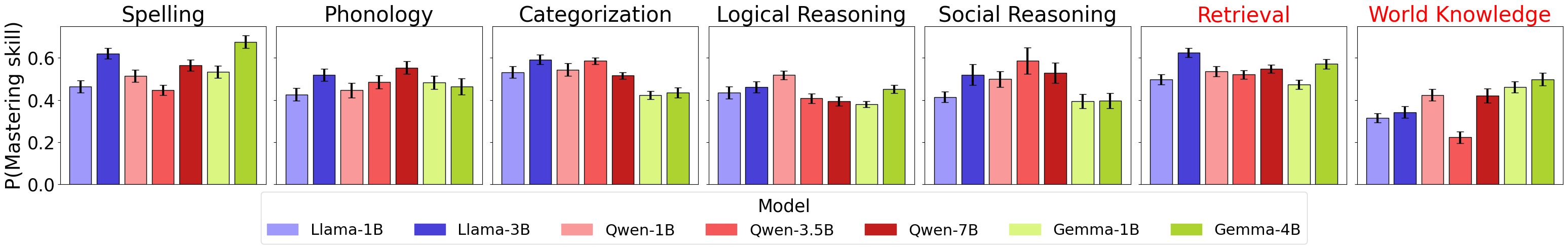}
    \caption{Estimates of skill mastery using (Algorithm~\ref{alg:bnsm}) on eight benchmark tasks and over seven open-source LLMs.}
    \label{fig:bnsm_results}
\end{figure}


\end{document}